\documentclass[journal]{IEEEtran}
\IEEEoverridecommandlockouts

\usepackage{cite}
\usepackage{amsmath,amssymb,amsfonts}
\usepackage{algorithmic}
\usepackage{graphicx}
\usepackage{textcomp}
\usepackage{bm}
\usepackage{mathtools}
\usepackage{subcaption}
\usepackage{float}
\usepackage{geometry}
\usepackage{soul}
\usepackage{algorithm}
\usepackage{booktabs}
\usepackage{tikz}
\usetikzlibrary{calc} 
\usetikzlibrary{arrows.meta, positioning, fit}
\usetikzlibrary{positioning, backgrounds}
\usepackage{amsthm}
\usepackage{booktabs}
\usepackage{multirow}
\usepackage{caption}
\usepackage{orcidlink}
\usetikzlibrary{shapes.geometric}

\newtheorem{theorem}{Theorem}
\newtheorem{assumption}{Assumption}
\newtheorem{lemma}{Lemma}
\newtheorem{definition}{Definition}

\hypersetup{
    colorlinks=true,
    citecolor=blue,
    linkcolor=red,   
    urlcolor=blue
}

\def\BibTeX{{\rm B\kern-.05em{\sc i\kern-.025em b}\kern-.08em
    T\kern-.1667em\lower.7ex\hbox{E}\kern-.125emX}}

\begin{document}

\title{
AI-Enabled Image-Based Hybrid Vision/Force Control of Tendon-Driven Aerial Continuum Manipulators 
}
\author{
Shayan Sepahvand, 
Farrokh Janabi-Sharifi, 
and Farhad Aghili\
\thanks{This work was supported by the National Research Council Canada (Grant AI4L-128-1), the Natural Sciences and Engineering Research Council of Canada (NSERC) under Grant 2023-05542, and the NSERC Discovery Grant 2017-06764 (\textit{Corresponding Author: Farrokh Janabi-Sharifi)}.}%
\thanks{Shayan Sepahvand and Farrokh Janabi-Sharifi are with the Department of Mechanical, Industrial, and Mechatronics Engineering, Toronto Metropolitan University, Toronto, Canada. {\tt\small \{shayan.sepahvand, niloufar.amiri, fsharifi\}@torontomu.ca}}%
\thanks{Farhad Aghili is with the Department of Mechanical, Industrial, and Aerospace Engineering (MIAE), Concordia University, Montreal, Quebec H3G 1M8, Canada. {\tt\small farhad.aghili@concordia.ca}}%
}

\maketitle

\vspace{-1cm}

\begin{abstract}
This paper presents an AI-enabled cascaded hybrid vision/force control framework for tendon-driven aerial continuum manipulators based on constant-strain modeling in $SE(3)$ as a coupled system. The proposed controller is designed to enable autonomous, physical interaction with a static environment while stabilizing the image feature error. The developed strategy combines the cascaded fast fixed-time sliding mode control and a radial basis function neural network to cope with the uncertainties in the image acquired by the eye-in-hand monocular camera and the measurements from the force sensing apparatus. This ensures rapid, online learning of the vision- and force-related uncertainties without requiring offline training. Furthermore, the features are extracted via a state-of-the-art graph neural network architecture employed by a visual servoing framework using line features, rather than relying on heuristic geometric line extractors, to concurrently contribute to tracking the desired normal interaction force during contact and regulating the image feature error. A comparative study benchmarks the proposed controller against established rigid-arm aerial manipulation methods, evaluating robustness across diverse scenarios and feature extraction strategies. The simulation and experimental results showcase the effectiveness of the proposed methodology under various initial conditions and demonstrate robust performance in executing manipulation tasks.

\end{abstract}

\begin{IEEEkeywords}
Hybrid Vision/Force Control, Tendon-Driven Aerial Continuum Manipulators, Image-Based Visual Servoing, Fast Fixed-time Sliding Mode Control.
\end{IEEEkeywords}

\section{Introduction}\label{s1}

 Aerial manipulation has emerged as a key capability for robotic systems operating in remote, elevated, or hazardous environments. While early work has largely focused on rigid-arm aerial manipulators (RA-AMs) \cite{Ollero2022}, recent advances in continuum robotics have enabled a new class of systems—tendon-driven aerial continuum manipulators (TD-ACMs)—that combine the mobility of multirotor platforms with the intrinsic compliance of continuum robots (CRs) \cite{Samadikhoshkho2021, Samadikhoshkho2022, Amiri2025}. Owing to their compliant structures, continuum arms offer superior adaptability in cluttered environments, reduced impact forces during contact, and improved safety during aerial physical interaction (APhI) with the environment \cite{Samadikhoshkho2022, Peng2025}.
 
Due to the continuous deformation of the continuum arm along its backbone, the mechanical structure of TD-ACMs fundamentally differs from that of RA-AMs in terms of kinematics, dynamics, force transmission, and interaction behavior. While rigid-arm manipulators can exhibit localized joint compliance and well-defined link inertias \cite{Ollero2022, Bartelds2016, Housseyn2023}, TD-ACMs feature configuration-dependent compliance, strong coupling between shape and force, and deformation under localized external loads \cite{Samadikhoshkho2022}. Therefore, modeling and control of TD-ACMs introduce significant challenges. (i) The distributed curvature and strain of CRs lead to computationally challenging formulations, often represented as Cosserat rod partial differential equations (PDEs), with underlying difficulties in accurate constitutive modeling. For instance, capturing bidirectional and shape-dependent kinematic and dynamic couplings in a time-efficient model of TD-ACMs remains an open challenge, which in turn complicates the design of model-based controllers. (ii) The states (shape and motion) of a deforming arm in TD-ACMs depend strongly on contact forces, and are also adversely affected by aerodynamic forces near contact surfaces (e.g., vortex shedding and downwash effect). Their state sensing cannot be obtained directly (e.g., using joint encoders and forward kinematics), leading to difficult and often uncertain and indirect state sensing and estimation. This, in turn, complicates the localization and positioning of the arm tip with respect to an uncertain target. (iii) Contrary to RA-AMs, which rely on fast UAV attitude loop and slower arm motion with the combined system dynamics treated hierarchically in control design \cite{Kremer2022}, arm deformations in soft aerial manipulators occur on similar time scales as UAV attitude dynamics \cite{Jitosho2025}, thereby challenging the use of hierarchical control schemes often employed in RA-AMs. (iv) Since accurate arm models are computationally intractable for real-time implementation and simplified models may fail to preserve the stability margin, the model-control compromise is far more severe in TD-ACMs. Stability dependence on shape, contact, disturbances, and modeling errors can cause loss of system control. Therefore, robustness becomes a central objective in the design of controllers for TD-ACMs. In short, these characteristics make interaction control for TD-ACMs substantially more challenging than for their rigid counterparts, and lead us to focus on effective modeling, efficient computing, and robust control design, augmented with a remedy for the state estimation and localization issue, as the core components of our interaction control strategy.  

\subsection{Related Work}

Control strategies for TD-ACMs have primarily focused on free-flight motion control \cite{Samadikhoshkho2021, Samadikhoshkho2022, Amiri2025, Peng2025, Ghorbani2023}, while interaction control of TD-ACMs has remained comparatively unexplored. Aerial interaction control has been more extensively studied in the context of RA-AMs \cite{Ollero2022}, with approaches commonly categorized into impedance control \cite{Suarez2018, Marković2021} and hybrid position/force control \cite{Nava2020, Tzoumanikas2020}. While impedance control is attractive for its simplicity and smooth transitions between free-flight and contact, it typically exhibits reduced tracking accuracy \cite{Ollero2022}. Alternatively, hybrid position/force control provides superior tracking accuracy, yet requires reliable localization \cite{Ollero2022}. Therefore, the focus of our work will be on designing the first, to the best of our knowledge, hybrid position/force control design for TD-ACMs with a remedy for the localization issue.  

Most prior aerial interaction control schemes rely on external localization sources such as global positioning system (GPS) or simultaneous localization and mapping (SLAM). However, these approaches become unreliable near infrastructure due to signal degradation and drift, which is particularly problematic for high-precision manipulation where vehicle and manipulator motions are tightly coupled \cite{He2023}. Owing to the advantages of vision sensors (e.g., their light weight and low power usage) in aerial robotics, visual servoing (VS) offers an appealing alternative to overcome modeling and localization uncertainties \cite{Corke2011}. In particular, image-based visual servoing (IBVS) circumvents full pose estimation, reduces sensitivity to calibration errors, and lowers computational burden when compared to alternative VS techniques such as position-based visual servoing (PBVS) \cite{Corke2011, Sharifi2011, Sepahvand2024}.  

Several vision-based interaction control methods have been reported for RA-AMs \cite{Byun2024, Xu2023, Wu2024, Zeng2023}. For example, in \cite{Xu2023}, a multi-stage adaptive visual impedance control strategy was developed to regulate contact force on a planar surface; however, the proportional-derivative (PD) controller utilized in the vision loop is endowed with image noise amplification, leading to chattering in the commanded camera velocity. A major drawback of many of those approaches \cite{Xu2023, Wu2024, Zeng2023} is that they typically assume point features and reliable feature extraction, assumptions that are often difficult to meet in real-world applications. Their assumptions of negligible aerodynamic effect (for rigid arms) and rigid-body kinematics cannot be extended to TD-ACMs either, and the deformable nature of CRs introduces additional uncertainties in kinematics, interaction matrix, feature measurements, and force sensing, all of which are amplified during contact. Therefore, our focus will be on the development of a robust hybrid vision/force control approach integrated with a reliable and computationally efficient IBVS scheme.  

\subsection{Approach and Contributions}

The proposed approach is fundamentally distinct from the previous work, with the main contributions summarized as follows. 

(1) Development of the first hybrid vision/force interaction control strategy for TD-ACMs. The proposed coupled control framework employs a cascaded fast fixed-time sliding mode controller (CFTSMC), augmented with radial basis function neural network (RBFNN) estimators, and an IBVS scheme. The robust and computationally efficient control design ensures robustness to modeling and measurement uncertainties, such as those in the image feature extraction, the interaction matrix, and force measurements, while guaranteeing fixed-time convergence of the image feature and force errors.

(2) Design of a robust and efficient VS scheme for APhI control. Our work is among a few studies that integrate the advantages of VS—such as robustness to modeling and positioning uncertainties— into an interaction control framework. It is further distinguished by enhanced perceptual robustness and computational efficiency through the use of line features, an RBFNN-based feature error estimator, and a deep feature extraction technique based on an offline-trained graph neural network (GNN).  The incorporation of perceptually robust line features, rather than point features, image moments, or keypoints \cite{Xu2023, Wu2024, Amiri2024CASE}, enhances robustness to viewpoint variations, partial occlusions, and illumination changes. Moreover, line features are more abundant in man-made environments \cite{Qu2025}. The computational burden and uncertainty associated with line feature extraction are mitigated through the adoption of a data-driven approach leveraging connectivity information \cite{Pautrat2023}, enabling interaction in unstructured environments without relying on man-made markers utilized in many previous works \cite{Xu2023, Zeng2023}. 

(3) Formulation of efficient and accurate kinematic and dynamic models in the special Euclidean group, $SE(3)$. A constant-strain modeling approach \cite{Renda2018} is adopted to formulate coupled kinematic and dynamic models in $SE(3)$ for TD-ACMs. The developed models capture vehicle–arm coupling and deformation effects while remaining tractable for effective simulation and control design. Unlike prior coupled models based on piecewise constant-curvature (PCC) formulations in the special orthogonal group ($SO(3)$) \cite{Samadikhoshkho2021, Ghorbani2023}, which suffer from limited accuracy in force-interaction tasks, or Cosserat rod models \cite{Jalali2022}, which are computationally intensive, the proposed framework achieves an effective balance between accuracy and computational efficiency. Furthermore, the formulation in $SE(3)$—whose advantages are well documented \cite{Seo2025}—enables a compact representation suitable for hybrid vision/force control. 

(4) Extensive software-in-the-loop and hardware-in-the loop validation. Extensive simulation and experimental studies are conducted to evaluate the approach robustness and compare its performance against conventional control schemes commonly used in RA-AMs. For instance, the proposed
controller out-performs PhI control methods prevalent in the RA-AM field such as proportional integral/proportional derivative (PI/PD) and the cascaded integral sliding mode controller (CISMC) \cite{Ahmadi2022} in tracking performance, demonstrated by improved root mean square error (RMSE) and standard deviation (STD) of error.

\section{Problem Formulation}

\subsection{Notation and Coordinate Frames}
The schematics of the TD-ACM along with the key coordinate frames are illustrated in Fig.~\ref{fig:frame_attachment}. The world frame (inertial frame) is defined as $\mathcal{F}_I=O_I\{\bm x_I, \bm y_I, \bm z_I\}$, whereas the body frame $\mathcal{F}_u=O_u\{\bm x_u, \bm y_u, \bm z_u\}$, is attached to the center of mass of the aerial platform so that the local inertia tensor is a diagonal matrix. Following the discretization of the arm into finite points, the frame  $\mathcal{F}_{a_i}=O_{a_i}\{\bm x_{a_i}, \bm y_{a_i}, \bm z_{a_i}\}$ is associated with the $i^{th}$ significant point, where \linebreak $i=\{1,...,n\}$, lying on the CR backbone. The eye-in-hand monocular camera frame $\mathcal{F}_C=O_C\{\bm x_C, \bm y_C, \bm z_C\}$ is attached to the optical center, assumed to be fixed w.r.t the tip frame $\mathcal{F}_T=O_T\{\bm x_T, \bm y_T, \bm z_T\}$. The wrench measurement apparatus, associated with $\mathcal{F}_f=O_f\{\bm x_f, \bm y_f, \bm z_f\}$, can be either mounted on the tip or the workpiece.

The notation $(\cdot)^\dagger$ denotes the Moore–Penrose generalized inverse of a matrix. A left-hand superscript is adopted to indicate the transformation between two frames, e.g. $\prescript{B}{}{\bm{g}}^{}_{A}$ transforms from frame $\mathcal{F}_A$ to $\mathcal{F}_B$. Notations $(\cdot)^\prime$ and $\dot{(\cdot)}$ indicate the derivative w.r.t the spatial and time variables $X$ and $t$, respectively. The operation $\hat{(\cdot)}$ is an isomorphism from the vector to the matrix representation of the Lie algebra $\mathfrak{se}(3)$. 
$\bm{\bm{\xi}} =    
\begin{bmatrix}
    \bm{\xi}_{a}^\top, &
    \bm{\xi}_{l}^\top
    \end{bmatrix}^\top \in \mathbb{R}^6$ constitutes the two sub-components $\bm{\xi}_{a}\in \mathbb{R}^3$ and $\bm{\xi}_{l}\in \mathbb{R}^3$, respectively. The $\tilde{(.)}$ operator denotes a mapping from $\mathbb{R}^3$ to Lie {algebra} $\mathfrak{so}(3)$ having $\bm{\xi}_a = \begin{bmatrix}
    \xi_{a_x}, &
    \xi_{a_y}, &
    \xi_{a_z}
\end{bmatrix}^\top \in \mathbb{R}^3$,
and the mappings $\operatorname{Ad}_{(\cdot)}$ and $\operatorname{ad}_{(\cdot)}$ represent the adjoint representation mappings of the Lie group and the Lie algebra, respectively. The notation $\operatorname{sig}(\bm{\gamma})^{\kappa}$ denotes the elementwise 
generalized sign function, defined as
$
\operatorname{sig}(\bm{\gamma})^{\kappa}
= \big[
|\gamma_1|^{\kappa}\operatorname{sign}(\gamma_1), \dots, 
         |\gamma_n|^{\kappa}\operatorname{sign}(\gamma_n) \big]^\top,
$
for any $\bm{\gamma} = [\gamma_1, \dots, \gamma_n]^\top \in \mathbb{R}^n$, where 
$\operatorname{sign}(\cdot)$ is the standard signum function. 

\subsection{Kinematics of the CR}

The forward kinematics of a single-section CR, assuming constant strain, is presented below. The pose of an infinitesimal cross-section along the backbone is defined as:

\begin{equation}
\bm{g}=
    \begin{bmatrix}
    \bm{R}(X,t) & \bm{u}(X,t)\\
    \bm{0}_{1 \times 3} & 1
    \end{bmatrix} \in SE(3),
\end{equation}

\noindent where $X$ is the spatial variable, $t$ denotes time,\linebreak $\bm{R}(X,t) \in SO(3)$ and $\bm{u}(X,t) \in \mathbb{R}^3$ are the rotation matrix and displacement vector, respectively. The material curvilinear abscissa $X$ is upper bounded by the total length of the arm $l_c$, i.e., $X \in \left[0,\ l_c \right]$. For the sake of compactness, the dependence on $(X, t)$ is dropped from the variables. The strain vector field $\bm{\xi}$ is defined as below: 

\begin{figure}[t]
    \centering
    \includegraphics[scale = 0.2]{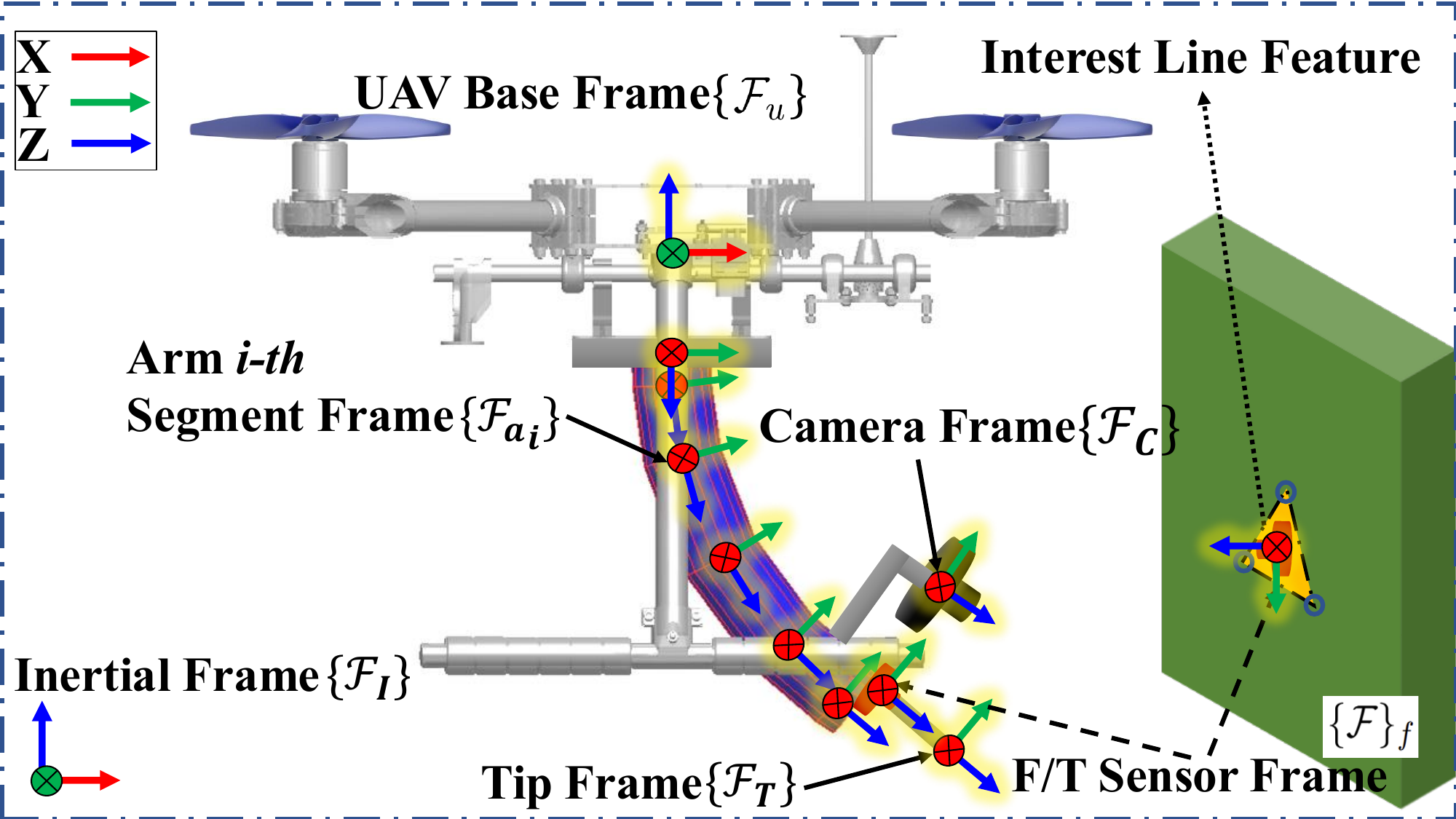}
    \caption{The schematic representation of the TD-ACM and the key coordinate frames.}
    \label{fig:frame_attachment}
\end{figure}

\vspace{-0.3cm}

\begin{equation}
    \hat{{\bm{\xi}}} = \bm{g}^{-1} \frac{\partial \bm{g}}{\partial X} = \bm{g}^{-1}   \bm{g}^\prime \in \mathfrak{se}(3).
\label{eq:main_strain}
\end{equation}

\noindent Likewise, the twist vector field $\bm{\varrho} =    \begin{bmatrix}
    \bm{\varrho}_{a}^\top,&
    \bm{\varrho}_{l}^\top
    \end{bmatrix}^\top \in \mathbb{R}^6$ is defined leveraging the time derivative of the configuration matrix:

\vspace{-0.5cm}

\begin{equation}
    \hat{\bm{\varrho}} = \bm{g}^{-1} \frac{\partial \bm{g}}{\partial t} = \bm{g}^{-1}   \dot{\bm{g}}\in \mathfrak{se}(3).
\end{equation}

\noindent Since the mixed partial derivatives commute, by employing \eqref{eq:main_strain}, the following compatibility equation is derived:\linebreak
$
\bm{\varrho}^\prime = \dot{\bm{\xi}}-\operatorname{ad}_{\bm{\xi}} \bm{\varrho}.
\label{eq:twist_equation}
$
\subsection{Kinematics of the TD-ACM}

The differential kinematics is expressed by incorporating the kinematic coupling between the arm and the UAV. Given $\mathfrak{n}(\bm \xi, \bm q_t, X) = \bm 0$ with $\mathfrak{n}$ defining the strain at any point along the CR, and assuming $\det(\partial \mathfrak{\bm n}/\partial \bm \xi) \neq 0$, the implicit function theorem~\cite{apostol1974} implies the existence of the mapping $\bm \xi = \bm \xi(\bm q_t)$ under the constant strain assumption. Let the generalized coordinate vector $\bm \xi = \bm{q}_t$ be defined as
$\bm{q}_t = 
\begin{bmatrix}
\bm{q}_{u}^\top & \bm{q}_{a}^\top
\end{bmatrix}^\top \in \mathbb{R}^{n_u+n_a}$, \linebreak where $\bm{q}_{u}\in \mathbb{R}^{n_u}$ and $\bm{q}_{a}\in \mathbb{R}^{n_a}$ represent the joint variables of the UAV and the CR, respectively. The configuration of the aerial platform is obtained w.r.t the inertial frame as $ \prescript{I}{}{\bm{g}}^{}_{u} = \bm{g}_s \exp \left(\hat{\bm{q}}_u\right)$, where $\bm{g}_s$ is the initial static transformation between the UAV body frame and the inertial frame. Considering the Jacobian expressed in the local frame $\mathcal{F}_u$, i.e., $\operatorname{Ad}_{\bm{g}_u^{-1}}  \bm{T}_u = \prescript{u}{}{\bm{J}}^{}_{u}$, the local body twist $\prescript{u}{}{\bm{\varrho}}^{}_{u}$, given the initial twist $\prescript{u}{}{\bm{\varrho}}^{}_{u_{0}}$, is given by 
 $\prescript{u}{}{\bm{\varrho}}^{}_{u} = \operatorname{Ad}_{\bm{g}_u^{-1}} \left ( \bm{T}_u \dot{\bm{q}}_u +\prescript{u}{}{\bm{\varrho}}^{}_{u_{0}} \right )$. The time derivative of the local Jacobian is calculated as
 $\prescript{u}{}{\dot{\bm{J}}}^{}_{u} = \operatorname{ad}_{\prescript{u}{}{\bm{\varrho}}^{}_{u}}  \bm{T}_u \dot{\bm{q}}_u +\dot{\bm{T}}_u
$. Considering the static transformation between the UAV frame and the soft arm base, one can formulate: $
   \prescript{I}{}{\bm{g}}^{}_{a_{0}} = \prescript{I}{}{\bm{g}}^{}_{u}  \prescript{}{}{\bm{g}}^{}_{a_{s}}$, and
$\prescript{a_0}{}{\bm{\varrho}}^{}_{a_0} = \operatorname{Ad}_{\bm{g}_{a_s}^{-1}} \prescript{u}{}{\bm{\varrho}}^{}_{u}
$.

The arm is next discretized into several significant points along its abscissa using Gaussian quadrature. The $i^{th}$ significant point kinematics is calculated based on the following relations:
\vspace{-0.2cm}
\begin{equation}
   \prescript{I}{}{\bm{g}}^{}_{a_i} = \prescript{I}{}{\bm{g}}^{}_{a_{i-1}} \exp \left( h_i\left( 
  \bm{B}\bm{q}_{a} +\bm{\xi}_{a}^* \right)\right),
\end{equation}
\vspace{-0.5cm}
\begin{equation}
 \prescript{a_i}{}{\bm{\varrho}}^{}_{a_i} = \operatorname{Ad}_{\prescript{a_{i-1}}{}{\bm{g}}^{-1}_{a_i}} \left ( h_i\bm{T}_{a_i} \bm{B}\bm{\dot{q}}_{a} +\prescript{a_{i-1}}{}{\bm{\varrho}}^{}_{a_{i-1}} \right ),
 \label{eq:twist_SE3}
\end{equation}
\vspace{-0.5cm}
\begin{equation}
\prescript{a_i}{}{\bm{J}}^{}_{a_i} = \operatorname{Ad}_{\bm{g}_{a_{i}}^{-1}}  \left(h_i\bm{T}_{a_i} \bm{B}+\prescript{a_{i-1}}{}{\bm{J}}^{}_{a_{i-1}}\right),
\label{eq:jacobian_general}
\end{equation}
\vspace{-0.5cm}
\begin{equation}
\prescript{a_i}{}{\bm{\dot{J}}}^{}_{a_i} = \operatorname{Ad}_{\bm{g}_{a_{i}}^{-1}}  \left(h_i\operatorname{ad}_{ \prescript{a_i}{}{\bm{\varrho}}^{}_{a_i}}\bm{T}_{a_i} \bm{B}+h_i\bm{\dot{T}}_{a_i}\bm{B}+\prescript{a_{i-1}}{}{\dot{\bm{J}}}^{}_{a_{i-1}}\right),
\end{equation}

\noindent where $\bm{B}$ is a matrix that gives information about the controllable {DoF} defined for the arm \cite{Renda2024}. Therein, $h_i$ is a scalar defining the distance between two significant points that discretize the arm. It is worth highlighting that the dynamics of TD-ACM is derived from the above kinematic equations and serves as a backbone for our simulation studies.

\begin{figure}[t]
\centering
\begin{tikzpicture}[
    scale=0.75, transform shape,
    font=\sffamily\small, 
    node distance=0.8cm and 0.8cm, 
    block/.style={draw, thick, minimum width=1.6cm, minimum height=1cm, align=center, fill=white, rounded corners},
    deriv/.style={draw, thick, minimum width=1cm, minimum height=1cm, align=center, fill=white, rounded corners},
    sum/.style={draw, circle, inner sep=1pt, minimum size=6mm},
    arrow/.style={-{Latex[length=3mm]}, thick},
    line/.style={thick},  
    mylabel/.style={font=\footnotesize},
    background/.style={rectangle, fill=#1, rounded corners, inner sep=0.2cm}
]


\node[sum] (sum1) {};

\node[block, below=of sum1, fill=yellow!20] (ForceTrajectory) {Force\\Trajectory};
\node[block, below=of ForceTrajectory, fill=yellow!20, yshift=0.2cm] (ImageTrajectory) {Image\\Trajectory};

\node[block, right=of sum1, fill=green!20, xshift=5mm] (pi) {CFTSMC};
\node[block, above=of pi, fill=green!20] (rbfnn) {RBFNN};

\node[block, right=of pi, xshift=2cm, fill=green!20] (plant) {Velocity Controlled \\ ACM + Environment};

\node[sum, right=of ImageTrajectory, xshift=5mm] (OutComp) {};
\node[block, right=of OutComp, xshift=1mm, fill=pink!20] (extractor) {Deep Feature\\ Extractor};

\node[block, at=(plant|-extractor), fill=pink!20] (feedback) {Visual\\Feedback};

\node[block, above=of extractor, fill=green!20, yshift=-3mm] (jacobian) {Differential\\ Kinematics};

\node at ([xshift=-0.3cm, yshift=0.3cm]sum1.center) {$\bm{-}$};
\node at ([xshift=0.3cm, yshift=0.3cm]OutComp.center) {$\bm{-}$};


\draw[arrow] (pi) -- node[midway, above] {$\dot{\bm{q}}_{des}$}(plant);
\draw[arrow] (sum1) -- node[midway, above] {$e_{f}$} (pi);
\draw[arrow] (OutComp) -- node[midway, right] {$\bm{e_s}$} (pi); 

\draw[arrow] (ForceTrajectory) -- (sum1);
\draw (ForceTrajectory) -- node[midway, left] {$f_{dn}$} (sum1); 
\draw[arrow] (ForceTrajectory.east) -| node[near start, above] {$\dot{f}_{dn}$} ($(pi.south)+(-0.6,0)$);

\draw[arrow] (ImageTrajectory) -- node[midway, below] {$\bm{s}_d$} (OutComp);
\draw[arrow] ($(ImageTrajectory.east)+(0,0.3)$) -| node[near start, above] {$\dot{\bm{s}}_d$} ($(pi.south)+(-0.2,0)$);

\draw[arrow] (plant.north) -- ++(0,0.5) -| node[pos=0.25, above] {$f_{n}$}  (sum1.north);

\draw[arrow] (plant.south) -- (feedback.north); 
\draw[arrow] (feedback) -- (extractor);
\draw[arrow] (extractor) -- node[midway, above] {$\bm{s}$} (OutComp);

\draw[arrow] (jacobian.west) -| node[near start, above] {$\bm J_t$} ($(pi.south) + (0.5, 0)$);
\draw[arrow] ($(plant.south) + (-0.5,0)$) |- (jacobian.east);

\draw[arrow] (sum1.west) -- ++(-0.4,0) 
  -- ++(0,1.8) 
  node[near end, right] {$e_f$} 
  -- (rbfnn.west);

\draw[arrow] (OutComp.south) -- ++(0,-0.4) 
  -- ++(-3.5,0) 
  |- node[near end, above] {$\bm e_s$} 
  ($(rbfnn.west)+(0,0.3)$);

\draw[arrow] (rbfnn) -- node[near end, right] {$\bm {\hat {\Delta}}_s, \hat \Delta_f$} (pi.north);

\end{tikzpicture}
\caption{The block diagram of the proposed control scheme.}
\label{fig:block_diagram}
\end{figure}

\section{Controller Design}

The ensuing formulation details a novel control system for the TD-ACM capable of maintaining a stable contact force exerted on a static planar surface through the end-effector. The overall structure of this control strategy is depicted in Fig.~\ref{fig:block_diagram}.

\subsection{Visual Servoing Based on Line Features}

To derive the IBVS control command, the $N_l$ tracked line parameters are aggregated into the vector \linebreak $\bm{s} = \left[\theta_1,\rho_1, \dots,\theta_i,\rho_i,\dots, \theta_{N_l},\rho_{N_l}\right]^\top \in \mathbb{R}^{2N_l}$. The interaction matrix corresponding to these features can be written as:

\begin{equation}
    \dot{\bm{s}} = \prescript{c}{}{\bm{L}}\prescript{t}{}{\bm{\varrho}_c}+\bm \Delta_s,
    \label{feature_error_rate}
\end{equation}

\noindent where $\bm{\Delta}_s\in \mathbb{R}^{2N_l}$ represents the uncertainty in the image features, $\prescript{c}{}{\bm{L}} = \left(\bm{L}_1^\top, \bm{L}_2^\top, \dots,\bm{L}_i^\top,\dots, \bm{L}_{N_l}^\top \right)^\top \in \mathbb{R}^{2N_l \times 6}$ is the vertical stacking of the matrices $\bm{L}_i = \begin{bmatrix} \bm{L}_{i,1}, & \bm{L}_{i,2} \end{bmatrix},
$ given by \eqref{eq:interaction_matrix},

\begin{equation}
    \begin{aligned}
        \bm{L}_{i,1} &= 
        \begin{bmatrix}
             \rho_i \sin{\theta_i} & \rho_i \cos{\theta_i} & 1 \\
             \cos{\theta_i}(1 + \rho_i^2) & -\sin{\theta_i}(1 + \rho_i^2) & 0
        \end{bmatrix}, \\
        \bm{L}_{i,2} &= 
        \begin{bmatrix}
             \lambda_{\theta_i} \sin{\theta_i} & \lambda_{\theta_i} \cos{\theta_i} & -\rho_i \lambda_{\theta_i} \\
             \lambda_{\rho_i} \sin{\theta_i} & \lambda_{\rho_i} \cos{\theta_i} & -\lambda_{\rho_i} \rho_i
        \end{bmatrix},
    \end{aligned}
\label{eq:interaction_matrix}
\end{equation}

\noindent in which the parameters \(\lambda_{\theta_i}\) and \(\lambda_{\rho_i}\) are defined as below:

\begin{align}
    \lambda_{\theta_i} &= \frac{-a_p\cos\theta_i + b_p\sin\theta_i}{d_p}, \\
    \lambda_{\rho_i}   &= \frac{a_p\rho_i\sin\theta_i + b_p\rho_i\cos\theta_i + c_p}{d_p},
\end{align}

\noindent where, for the $i^{th}$ line, $\theta_i$ represents the angle of the line relative to the horizontal image axis, while $\rho_i$ denotes the perpendicular distance from the image frame origin to the line. The equation of the plane containing the line feature in 3-space can be given as $a_pX_w+b_pY_w+c_pZ_w+d_p=0$. 

Let $\mathcal{F}_{a_n}$ denote the frame attached to the last significant cross-section. Expressing the local twist of the tip in the tip frame via \eqref{eq:twist_SE3} yields:

\vspace{-0.5cm}

\begin{equation}
 \prescript{t}{}{\bm{\varrho}}^{}_{t} =\operatorname{Ad}_{\prescript{t}{}{\bm{g}}_{a_n}}  \operatorname{Ad}_{\prescript{a_{n-1}}{}{\bm{g}}^{-1}_{a_n}} \left ( h_n\bm{T}_{a_n} \bm{B}\bm{\dot{q}}_{a} +\prescript{a_{n-1}}{}{\bm{\varrho}}^{}_{a_{n-1}} \right ),
\end{equation}

\noindent where $\prescript{t}{}{\bm{g}}_{a_n}$ is a fixed known transformation between the tip and the last significant point on the CR. Considering the static transformation between the tip and the camera frame, $\prescript{c}{}{\bm{g}}_t$, the local twist of the camera can be obtained as
$
 \prescript{t}{}{\bm{\varrho}}^{}_{c} = {\operatorname{Ad}_{\prescript{c}{}{\bm{g}}_t}} \prescript{t}{}{\bm{\varrho}}^{}_{t}.
$ Using \eqref{eq:interaction_matrix}, the conventional IBVS control law to stabilize the image feature error, i.e., $\bm e_s \triangleq \left( \bm{s}_d - \bm{s} \right)$ can be presented as,
$
\bm{\dot{q}}_{des} =\bm{K}_{p} \bm{J}_t^{\dagger} \,{\operatorname{Ad}_{\prescript{t}{}{\bm{g}}_c}} \prescript{c}{}{\bm{L}}^{\dagger}\bm e_s
$ \cite{Corke2011}.


\subsection{Gluestick Line Feature Extraction }
 We have adopted the Gluestick \cite{Pautrat2023} framework to address specific control requirements, which are the real-time extraction and parameterization of a finite number of lines. Gluestick is a deep line segment detector that outperforms many traditional techniques rooted in Hough transform relying on the detection of high-gradient image regions. This technique takes advantage of the descriptor-based matching using SuperPoints \cite{DeTone2018} interest corners, integrating both endpoints and the lines simultaneously to better disambiguate between the line segments. In this architecture, robot operating system (ROS) middleware is leveraged to publish the parameterized feature vector from Gluestick to the controller. 

\subsection{Interaction Force Modeling}
During contact between the tip and the workpiece, the normal exerted force aligned with the z-axis of the tip, i.e., $\bm z_T$, is computed as $f_n = \hat{k}_e p_t$ for $p_t \geq 0$ and $0$ otherwise, where $\hat{k}_e$ is the estimated stiffness of the environment, and $p_t$ is the displacement of tip along its local z-axis. By taking the time derivative of normal interaction force in case of contact, we obtain:

\begin{equation}
\dot{f}_n =\hat{k}_e \bm{Q}\prescript{t}{}{\bm{\varrho}_c}+\Delta_f,
\label{eq:force_rate}
\end{equation}

\noindent with $\bm{Q} = \begin{bmatrix}
    0&0&0&0&0&1
\end{bmatrix}$
which is known as the direction matrix and $\Delta_f$ representing the uncertain part of the force system. Substituting \eqref{feature_error_rate} into \eqref{eq:force_rate} yields:

\begin{equation}
    \dot{f}_n =\hat{k}_e \bm{Q}\prescript{c}{}{\bm{L}}^{\dagger}\dot{\bm{s}}+\Delta_f
    \label{eq:StiffnessModel2}.
\end{equation}

\begin{assumption}
Similar to the previous works \textup{\cite{Kirner2024, Xu2023}}, we assume negligible friction because it has little impact on the TD-ACM to accomplish the manipulation task.
\label{assum:friction_force}
\end{assumption}

\begin{assumption}
The values of $\bm \Delta_s$ and $\Delta_f$ in \eqref{feature_error_rate} and \eqref{eq:force_rate}, respectively, are bounded due to the finite bandwidth of the sensing hardware.
\label{assum:uncer_bound}
\end{assumption}

\subsection{SE(3)-Based Cascaded Vision/Force Control with RBFNN Estimation}

The hybrid vision/force control is proposed to achieve asymptotic convergence of both visual and the normal force tracking errors:
\begin{equation}
\lim_{t \to \infty} \bm{e}_s = \bm{0}, \quad \lim_{t \to \infty} e_{f} = 0,
\end{equation}
where
\begin{equation}
\bm{e}_s = \bm{s}_d - \bm{s}, \quad e_{f} = f_{dn} - f_n,
\label{eq:error_main}
\end{equation}
with $\bm{e}_s \in \mathbb{R}^{2N_l}$ and $e_{f} \in \mathbb{R}$. The desired values of the image feature vector and the normal force profile are denoted by $\bm{s}_d$ and $f_{dn}$, respectively. The relation between the wrench measured by the force-torque (F/T) sensing apparatus, $\bm{F}_f$, and the variable $f_n$ can be stated as:

\begin{equation}
   f_n = \bm Q \bm{F}_t=\bm Q \operatorname{Ad}^*_{\prescript{t}{}{\bm{g}_f}}\bm{F}_f,
\end{equation}

\noindent where $\operatorname{Ad}^*_{(\cdot)}$ is the co-adjoint representation of the configuration matrix, which depends on how the sensor is integrated into the system (see Fig.~\ref{fig:frame_attachment}). The system dynamics is given by

\begin{equation}
\begin{aligned}
\dot{\bm{e}}_s &= \dot{\bm{s}}_d - \bm{L} \bm{J}_t \dot{\bm{q}} - \bm{\Delta}_s, \\
\dot{e}_{f} &= \dot{f}_{dn} - \hat{k}_e \bm{Q} \bm{L}^\dagger \dot{\bm{s}} - \Delta_f.
\end{aligned}
\label{eq:uncertain_error_system}
\end{equation}

To prove the stability of the proposed controller, the following definitions and assumptions are established.

\begin{definition} 
If the origin of the system $\dot{\Lambda} = \mathfrak{s}(\Lambda)$ with $\mathfrak{s}(0) = 0$ is Lyapunov stable, then the origin is said to be finite-time stable if there exists a settling-time function $T: \mathbb{R}^n \setminus \{0\} \to \mathbb{R}^+$ such that the solution $\Lambda(t, \Lambda_0)$ of the system satisfies \textup{\cite{Bhat2000}}
\[
\lim_{t \to T(\Lambda_0)} \Lambda(t, \Lambda_0) = 0, 
\quad \forall \Lambda_0 \in \mathbb{R}^n \setminus \{0\}.
\]
\end{definition}

\begin{definition}
The system in Definition 1 is said to be \emph{fixed-time stable} if it is globally 
finite-time stable, meaning that all trajectories converge to the origin within 
a bounded convergence time $T(\Lambda_0)$. Furthermore, there exists a positive 
constant $T_{\max}$ such that \textup{\cite{Polyakov2012}}
\[
T(\Lambda_0) < T_{\max}, \quad \forall \Lambda_0 \in \mathbb{R}^n.
\]
\end{definition}

\begin{lemma}
Suppose there exists a continuously differentiable Lyapunov function $V(\Lambda): \mathbb{R}^n \to \mathbb{R}^+$ and positive constants $\mathcal{L}_1, \mathcal{L}_2, \mathcal{B}_1, \mathcal{B}_2, {\mathcal{B}_3} \in \mathbb{R}^+$ satisfying
$
0 < \mathcal{B}_1\mathcal{B}_3< 1, \quad \mathcal{B}_2\mathcal{B}_3 > 1,
$
such that
$
\dot{V}(\Lambda) \leq -(\mathcal{L}_1 V(\Lambda)^{\mathcal{B}_1} + \mathcal{L}_2 V(\Lambda)^{\mathcal{B}_2})^{\mathcal{B}_3}.
$
Then the equilibrium $\Lambda = 0$ is fixed-time stable, and the settling time $T$ is upper-bounded by \textup{\cite{Zuo2015}}
\[
T \leq \frac{1}{\mathcal{L}_1^{\mathcal{B}_3} (1 - {\mathcal{B}_1}\mathcal{B}_3)} + \frac{1}{\mathcal{L}_2^{\mathcal{B}_3} ({\mathcal{B}_2\mathcal{B}_3} - 1)}.
\]
\label{lemma:fixed-time-stability}
\end{lemma}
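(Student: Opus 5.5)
We argue along a single trajectory, comparing $V$ with a scalar differential inequality and integrating it in two phases. Fix $\Lambda_0 \neq 0$ and write $v(t)=V(\Lambda(t))$. By hypothesis $\dot v \le -(\mathcal{L}_1 v^{\mathcal{B}_1}+\mathcal{L}_2 v^{\mathcal{B}_2})^{\mathcal{B}_3} \le 0$, so $v$ is nonincreasing. Together with $V$ being positive definite (implicit in calling it a Lyapunov function), this gives Lyapunov stability of the origin. It remains to show that $v$ reaches $0$ within a time bounded independently of $v(0)$. Once $v=0$ it stays there, since $\dot v\le 0$ and $v\ge 0$.

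First we split the sum inside the power. Since $\mathcal{B}_3>0$ and both summands are nonnegative,
\[
(\mathcal{L}_1 v^{\mathcal{B}_1}+\mathcal{L}_2 v^{\mathcal{B}_2})^{\mathcal{B}_3} \ge \max\{\mathcal{L}_1^{\mathcal{B}_3} v^{\mathcal{B}_1\mathcal{B}_3},\ \mathcal{L}_2^{\mathcal{B}_3} v^{\mathcal{B}_2\mathcal{B}_3}\}.
\]
This holds because $x\mapsto x^{\mathcal{B}_3}$ is increasing and $a+b\ge\max\{a,b\}$. It is the step where one might be tempted to use a concavity or convexity inequality such as $(a+b)^{p}\ge 2^{p-1}(a^p+b^p)$. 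That would introduce spurious constants, so the plain max bound is the right tool: it yields exactly the stated bound without any factor of $2$.

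Next we integrate in two phases. In phase one, while $v\ge 1$, we use $\dot v\le -\mathcal{L}_2^{\mathcal{B}_3} v^{\mathcal{B}_2\mathcal{B}_3}$ with $\mathcal{B}_2\mathcal{B}_3>1$. Separating variables, the time to go from $v(0)$ down to $1$ is at most
\[
\int_1^{v(0)}\frac{dv}{\mathcal{L}_2^{\mathcal{B}_3}v^{\mathcal{B}_2\mathcal{B}_3}} \le \int_1^{\infty}\frac{dv}{\mathcal{L}_2^{\mathcal{B}_3}v^{\mathcal{B}_2\mathcal{B}_3}} = \frac{1}{\mathcal{L}_2^{\mathcal{B}_3}(\mathcal{B}_2\mathcal{B}_3-1)}.
\]
This integral converges as the upper limit goes to infinity, and that convergence is precisely what makes the bound independent of $\Lambda_0$. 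In phase two, once $v\le 1$, we use $\dot v\le -\mathcal{L}_1^{\mathcal{B}_3} v^{\mathcal{B}_1\mathcal{B}_3}$ with $0<\mathcal{B}_1\mathcal{B}_3<1$. The time to reach $0$ is then at most
\[
\int_0^1 \frac{dv}{\mathcal{L}_1^{\mathcal{B}_3}v^{\mathcal{B}_1\mathcal{B}_3}} = \frac{1}{\mathcal{L}_1^{\mathcal{B}_3}(1-\mathcal{B}_1\mathcal{B}_3)}.
\]
If $v(0)<1$, phase one is simply skipped. Adding the two phase times gives the stated $T$.

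Finally we make the separation-of-variables step rigorous via a comparison argument. While $v>0$, the function $\Phi(t)=\int_{v(t)}^{v(0)} dv/\phi(v)$, where $\phi$ is the chosen lower-bound rate, satisfies $\dot\Phi = -\dot v/\phi(v) \ge 1$. Hence $\Phi(t)\ge t$, and $v$ must hit the threshold ($1$, or $0$) before $t$ exceeds the corresponding integral. The main technical point is exactly this handling of the non-Lipschitz right-hand side near $v=0$. The comparison argument avoids any need for uniqueness of solutions of the scalar ODE, so it is the approach I will use. Since the resulting bound on $T(\Lambda_0)$ is uniform, Definition 2 gives fixed-time stability.
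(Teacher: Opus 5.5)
Your argument is correct and matches the standard proof from the reference the paper cites for this lemma; the paper gives no proof of its own. That proof likewise drops one summand inside the power using monotonicity of $x\mapsto x^{\mathcal{B}_3}$, bounds the time to reach $V=1$ by the convergent tail integral $\int_1^\infty$, and bounds the time from $V=1$ to $0$ by the integrable singularity at $0$. One hypothesis you should state explicitly is that $V$ is radially unbounded, as in Polyakov's original version of the lemma. Your comparison step assumes the solution exists until $v$ reaches the threshold, and only boundedness of the sublevel sets $\{V\le V(\Lambda_0)\}$ rules out a finite escape time before then.
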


\begin{assumption}
The combined image Jacobian matrix $\bm{L} \bm{J}_t$, which maps joint velocities to feature velocities, is assumed to be full row rank \textup{\cite{Siciliano2008}}.
\label{assum:invertibility}
\end{assumption}

\begin{assumption}
There exists a vector $\bm\lambda_1$ such that $\bm Q\bm L^\dagger \bm L\bm\lambda_1=1$ \textup{\cite{Ahmadi2022}}.
\label{assum:decoupler}
\end{assumption}

\begin{definition}  
The uncertainty terms 
$\bm{\Delta}_s$ and ${\Delta}_f$ in \eqref{eq:uncertain_error_system} are estimated by the following RBFNNs
\begin{equation}
\bm \Delta_s=\bm{W}_s^\top \bm\phi_s+\bm{\varepsilon}_s,\qquad 
\Delta_f=\bm{W}_f^\top \bm\phi_f+\varepsilon_f,
\label{eq:uncertain_parts}
\end{equation}

\noindent where $\bm{W}_s, \bm{W}_f$ are constant ideal weights, $\bm{\phi}_s,\bm{\phi}_f$ are prespecified basis vectors, and remainder terms $\bm{\varepsilon}_s,\varepsilon_f$ are bounded within $|\bm \varepsilon_s|\le \bar{\bm{\varepsilon}}_s$, $|\varepsilon_f|\le \bar\varepsilon_f$. 
The estimates are defined as $\hat {\bm \Delta}_s=\hat {\bm{W}}_s^\top \bm\phi_s$, $\hat\Delta_f=\hat {\bm{W}}_f^\top \bm\phi_f$, with their corresponding error signals $\tilde {\bm{W}}_s\triangleq \bm{W}_s-\hat {\bm{W}}_s$, $\tilde {\bm{W}}_f\triangleq \bm{W}_f-\hat {\bm{W}}_f$.
\end{definition}

\begin{definition} 
The shaper function $\Theta(\cdot)$ with constraint parameters $\mathcal{M}>1$, $\mathcal{N}\in(0,1)$, $\mathcal{J}>0$, $\mathcal{Q}>0$, and gains $\mathcal{C}_1, \mathcal{C}_2 > 0$ is defined {as follows} \textup{\cite{Gao2023}}:
\begin{equation}
\Theta(\bm{e}) = \left( \mathcal{C}_1 \|\bm{e}\|^{\frac{\mathcal{M}-1}{\mathcal{Q}\mathcal{J}}} + \mathcal{C}_2 \|\bm{e}\|^{\frac{\mathcal{N}-1}{\mathcal{Q}\mathcal{J}}} \right)^{\mathcal{Q}\mathcal{J}} \bm{e} + \operatorname{sig}^{\mathcal{J}}(\bm{e}).
\label{eq:theta_FNFTSM}
\end{equation}
\end{definition}

\begin{theorem}
Consider the image and force errors defined in \eqref{eq:error_main} and their corresponding time derivatives \eqref{eq:uncertain_error_system}. Suppose $\bm{\Gamma}_s$ and $\bm{\Gamma}_f$ are positive-definite diagonal gain matrices, $\delta_f$ and $\delta_s$ are positive scalar parameters, and $\lambda_s > 0$ is a weighting scalar. Under Assumptions \textup{\ref{assum:invertibility}} and \textup{\ref{assum:decoupler}}, applying the control law \eqref{eq:final_control_law} along with the adaptive laws \eqref{eq:adaptation_laws} guarantees that all closed-loop signals are bounded and that the error vector $(\bm{e}_s, e_f)$ converges to the origin in fixed time, uniformly with respect to the initial conditions:

\vspace{-0.3cm}
\begin{equation}
\begin{split}
\dot{\bm q}_{des} &= \bm{J}_t^\dagger \bm {L}^\dagger \bigg(
    \dot{\bm s}_d-\hat {\bm{\Delta}}_s+ \Theta(\bm {e}_s) \\
    &\quad + \hat{k}_e^{-1}\bm L \bm\lambda_1 \big(
      \dot f_{dn} -\hat\Delta_f+\Theta(e_f) \big) \\
    &\quad + k_s\,\operatorname{sat}\!\left(\frac{\bm e_s}{\delta_s}\right)
     + k_f\,\bm L\bm\lambda_1\,\operatorname{sat}\!\left(\frac{ e_f}{\delta_f}\right)
\bigg),
\end{split}
\label{eq:final_control_law}
\end{equation}
\noindent where $\hat{\Delta}_f$ and $\hat{\bm{\Delta}}_s$ in \eqref{eq:final_control_law} are the estimates of the uncertainty terms given in \eqref{feature_error_rate} and \eqref{eq:force_rate} when the following adaptive laws are employed:
\vspace{-0.2cm}

\begin{equation}
\begin{aligned}
\dot{\hat{\bm W}}_s &= \bm {\Gamma}_s \Big( \lambda_s \, \bm e_s \, \bm \phi_s^\top 
+ \hat{k}_e \, e_f \, \bm Q \, \bm L^\dagger \, \bm \phi_s^\top \Big),\\
\dot{\hat{\bm W}}_f &= \bm{\Gamma}_f \Big( \lambda_s \, \bm e_s \, \bm \phi_f^\top 
+ e_f \, \bm \phi_f^\top \Big).
\end{aligned}
\label{eq:adaptation_laws}
\end{equation}
In addition, the $\operatorname{sat}(\cdot)$ functions and their bounds are defined in \eqref{eq:sat_s} and \eqref{eq:sat_f}.
\end{theorem}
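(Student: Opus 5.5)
The plan is to substitute the control law \eqref{eq:final_control_law} into the error dynamics \eqref{eq:uncertain_error_system} and then use a composite Lyapunov function. This function combines a weighted quadratic form in the two errors with quadratic penalties on the weight-estimation errors; its derivative is then shaped so that Lemma~\ref{lemma:fixed-time-stability} applies.

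\textbf{Closed-loop error dynamics.} By Assumption~\ref{assum:invertibility}, $\bm L\bm J_t\bm J_t^\dagger\bm L^\dagger$ acts as the identity on the range of $\bm L$ (we take $\bm J_t^\dagger\bm L^\dagger=(\bm L\bm J_t)^\dagger$, a right inverse). We also treat $\dot{\bm q}=\dot{\bm q}_{des}$ under the velocity-controlled plant. The closed-loop image error then becomes
\[
\dot{\bm e}_s=-\Theta(\bm e_s)-\tilde{\bm\Delta}_s-\bm\varepsilon_s-k_s\operatorname{sat}(\bm e_s/\delta_s)-\bm L\bm\lambda_1\big(\hat k_e^{-1}(\dot f_{dn}-\hat\Delta_f+\Theta(e_f))+k_f\operatorname{sat}(e_f/\delta_f)\big),
\]
where $\tilde{\bm\Delta}_s=\tilde{\bm W}_s^\top\bm\phi_s$.

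For the force channel, substitute $\dot{\bm s}$ into $\dot e_f$ and apply Assumption~\ref{assum:decoupler}, $\bm Q\bm L^\dagger\bm L\bm\lambda_1=1$. The term $\hat k_e\bm Q\bm L^\dagger\bm L\bm\lambda_1\hat k_e^{-1}(\dot f_{dn}-\hat\Delta_f+\Theta(e_f))$ then cancels $\dot f_{dn}$ and leaves
\[
\dot e_f=-\Theta(e_f)-\tilde{\Delta}_f-\varepsilon_f-\hat k_e k_f\operatorname{sat}(e_f/\delta_f)-\hat k_e\bm Q\bm L^\dagger\big(\dot{\bm s}_d-\hat{\bm\Delta}_s+\Theta(\bm e_s)+k_s\operatorname{sat}(\bm e_s/\delta_s)+\dots\big),
\]
with cross-coupling terms collected. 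This is exactly why the adaptive laws \eqref{eq:adaptation_laws} carry the mixed terms $\hat k_e e_f\bm Q\bm L^\dagger\bm\phi_s^\top$ and $\lambda_s\bm e_s\bm\phi_f^\top$.

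\textbf{Lyapunov analysis.} Take
\[
V=\tfrac{\lambda_s}{2}\bm e_s^\top\bm e_s+\tfrac12 e_f^2+\tfrac12\operatorname{tr}(\tilde{\bm W}_s^\top\bm\Gamma_s^{-1}\tilde{\bm W}_s)+\tfrac12\operatorname{tr}(\tilde{\bm W}_f^\top\bm\Gamma_f^{-1}\tilde{\bm W}_f).
\]
Since $\dot{\tilde{\bm W}}=-\dot{\hat{\bm W}}$, the adaptive laws are chosen so that every term that is linear in $\tilde{\bm W}_s,\tilde{\bm W}_f$ cancels in $\dot V$. These are the direct terms $\lambda_s\bm e_s^\top\tilde{\bm W}_s^\top\bm\phi_s$ and $e_f\tilde{\bm W}_f^\top\bm\phi_f$, and the coupling terms produced through $\bm Q\bm L^\dagger$. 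We verify this term by term.

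The residual terms $\bm\varepsilon_s,\varepsilon_f$ and the remaining bounded coupling terms are bounded by Assumption~\ref{assum:uncer_bound} and the boundedness of $\bm\phi$ and the desired trajectories. Outside the boundary layers $|e|>\delta$, the saturation terms equal $k\,\operatorname{sign}(e)$, so choosing $k_s,k_f$ larger than these bounds (the bounds in \eqref{eq:sat_s}, \eqref{eq:sat_f}) dominates them. This gives
\[
\dot V\le-\lambda_s\bm e_s^\top\Theta(\bm e_s)-e_f\Theta(e_f).
\]
The right-hand side is nonpositive, so $V$ is nonincreasing and all signals, including $\hat{\bm W}$, are bounded.

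\textbf{Fixed-time convergence.} From \eqref{eq:theta_FNFTSM},
\[
\bm e^\top\Theta(\bm e)\ge\big(\mathcal C_1\|\bm e\|^{\frac{\mathcal M-1}{\mathcal Q\mathcal J}}+\mathcal C_2\|\bm e\|^{\frac{\mathcal N-1}{\mathcal Q\mathcal J}}\big)^{\mathcal Q\mathcal J}\|\bm e\|^2.
\]
Writing $V_e=\tfrac{\lambda_s}{2}\|\bm e_s\|^2+\tfrac12 e_f^2$, this bound turns the inequality into the form $\dot V_e\le-(\mathcal L_1V_e^{\mathcal B_1}+\mathcal L_2V_e^{\mathcal B_2})^{\mathcal B_3}$. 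The exponents satisfy $\mathcal B_1\mathcal B_3<1$ (from $\mathcal N<1$) and $\mathcal B_2\mathcal B_3>1$ (from $\mathcal M>1$). To combine the two channels we use a power-mean inequality $\sum a_i^p\ge c\,(\sum a_i)^p$, valid for $p\le1$, with the reverse direction and a dimension constant for $p>1$. Lemma~\ref{lemma:fixed-time-stability} then gives a settling time independent of the initial conditions.

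\textbf{Main obstacle.} The hardest step is applying the lemma rigorously, because the decrease is only in $V_e$, not in the full $V$. The weight errors do not converge and only cancel, so we must argue that the cross terms really vanish and that the $\tilde{\bm W}$ contributions drop out of $\dot V_e$ itself. The cleanest route I would try first is a two-stage argument: use the full $V$ for boundedness, then bound $\tilde{\bm\Delta}$ uniformly and absorb it into the switching gains $k_s,k_f$, so that $V_e$ alone satisfies the lemma outside the boundary layer. The boundary-layer region $|e|\le\delta$ must also be handled. There, strictly speaking, only convergence to an $O(\delta)$ neighborhood is guaranteed, and I would state the result as practical fixed-time convergence that becomes exact as $\delta_s,\delta_f\to0$.
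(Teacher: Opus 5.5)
\textbf{The gap is the step that yields $\dot V\le-\lambda_s\bm e_s^\top\Theta(\bm e_s)-e_f\Theta(e_f)$.} After the $\tilde{\bm W}_s$ terms cancel, $\dot V$ still contains the cross terms $-\lambda_s\hat k_e^{-1}\bm e_s^\top\bm L\bm\lambda_1\,\Theta(e_f)$ and $-\hat k_e e_f\,\bm Q\bm L^\dagger\Theta(\bm e_s)$. These are the paper's $T_1,T_2$ in \eqref{eq:Vdot_with_R}.

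They are not ``bounded coupling terms.'' At large errors they are of order $\|\bm e_s\|\,|e_f|^{\mathcal M}$ and $|e_f|\,\|\bm e_s\|^{\mathcal M}$, which is the same total degree as the diagonal terms you keep. Their coefficients also satisfy $\|\hat k_e^{-1}\bm L\bm\lambda_1\|\,\|\hat k_e\bm Q\bm L^\dagger\|\ge\bm Q\bm L^\dagger\bm L\bm\lambda_1=1$, by Cauchy--Schwarz and Assumption~\ref{assum:decoupler}. So they are never small relative to the diagonal terms, and a switching action that is only linear in the error cannot absorb them.

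The switching cross terms $-\lambda_s k_f\bm e_s^\top\bm L\bm\lambda_1\operatorname{sat}(e_f/\delta_f)$ and $-\hat k_e k_s e_f\bm Q\bm L^\dagger\operatorname{sat}(\bm e_s/\delta_s)$ do not help either. They grow with the very gains you want to raise. The worst-case domination requirements $k_s>k_f\|\bm L\bm\lambda_1\|_\infty$ and $k_f>k_s\|\bm Q\bm L^\dagger\|_1$ cannot both hold, since H\"older gives $\|\bm Q\bm L^\dagger\|_1\|\bm L\bm\lambda_1\|_\infty\ge 1$.

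Two smaller problems sit in the same step. The residue $-\lambda_s\hat k_e^{-1}\bm e_s^\top\bm L\bm\lambda_1(\dot f_{dn}-\hat\Delta_f)$ is bounded only once $\hat{\bm W}_f$ is known to be bounded, which you derive from this same inequality. Your ``term by term'' cancellation also fails for $\tilde{\bm W}_f$ with \eqref{eq:adaptation_laws} as written: $\bm e_s\bm\phi_f^\top$ is a matrix while $\hat{\bm W}_f$ is a vector, and a matching term would have to be $\lambda_s\hat k_e^{-1}(\bm e_s^\top\bm L\bm\lambda_1)\bm\phi_f$.

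\textbf{This cannot be fixed by tuning, because it is structural.} From \eqref{eq:uncertain_error_system} and $\dot{\bm s}=\dot{\bm s}_d-\dot{\bm e}_s$,
\[
\dot e_f=\dot f_{dn}-\hat k_e\bm Q\bm L^\dagger\dot{\bm s}_d-\Delta_f+\hat k_e\bm Q\bm L^\dagger\dot{\bm e}_s .
\]
So on any interval where $\bm e_s\equiv\bm 0$, the force error evolves independently of $\dot{\bm q}$; the force action enters only through $\bm L\bm\lambda_1$, i.e.\ by moving the features. The origin $(\bm e_s,e_f)=(\bm 0,0)$ is therefore not even invariant under the closed loop unless $\dot f_{dn}=\hat k_e\bm Q\bm L^\dagger\dot{\bm s}_d+\Delta_f$. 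This is a compatibility condition between the two references that the theorem does not assume. Without it, even your practical fall-back fails in general: while $\bm e_s$ is held near zero, $e_f$ essentially integrates this mismatch.

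The paper's proof does not skip $T_1,T_2$ as you do. It bounds them by Young's inequality in \eqref{eq:T1T2_bound} and asserts that the leftovers can be made lower order or arbitrarily small. With equal degrees and a coupling product $\ge 1$, that assertion is unsupported too, so importing it will not close your gap.

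The issues you flag as the main obstacle are real but secondary. Bounding $\tilde{\bm\Delta}$ through $V(0)$ makes $k_s,k_f$ depend on the initial condition, which forfeits uniformity. The boundary layer yields only ultimate boundedness.

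Your final step has the right shape for Lemma~\ref{lemma:fixed-time-stability}: you use both the $\mathcal M$ and $\mathcal N$ parts of $\Theta$ in each channel and work with $V_e$ rather than the full $V$. This is better than the paper's closing step, which bounds $\dot V_t$ via $\|\bm e_s\|\le\sqrt{2V_t}$, an inequality pointing the wrong way. But your step presupposes the decoupled inequality, which is not available.
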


\begin{proof}
Consider the following Lyapunov candidate function:
\begin{equation}
V_t = \frac{1}{2} \lambda_s \|\bm e_s\|^2 + \frac{1}{2} e_f^2
+ \frac{1}{2} \tilde {\bm W}_s^\top \bm \Gamma_s^{-1} \tilde {\bm W}_s
+ \frac{1}{2} \tilde {\bm W}_f^\top \bm \Gamma_f^{-1} \tilde {\bm W}_f.
\label{eq:V_def}
\end{equation}

\noindent Differentiating $V_t$, using $\dot{\tilde {\bm W}}_s=-\dot{\hat {\bm W}}_s$, $\dot{\tilde {\bm W}}_f=-\dot{\hat {\bm W}}_f$, substitution of the control law \eqref{eq:final_control_law} and the adaptive relations \eqref{eq:adaptation_laws} lead to:

\begin{equation}
\label{eq:Vdot_with_R}
\begin{aligned}
\dot V_t
&= -\lambda_s\,\bm e_s^\top\Theta(\bm e_s) - e_f\,\Theta(e_f)\\
&\quad
- \lambda_s \bm e_s^\top \hat{k}_e^{-1} \bm L \bm\lambda_1 \,\Theta(e_f)
- e_f\, \hat{k}_e\,\bm Q \bm L^\dagger \,\Theta(\bm e_s)\\
&\quad
- \lambda_s\bm e_s^\top\bm\varepsilon_s - e_f\varepsilon_f\\
&\quad
-  \lambda_s\bm e_s^\top k_s \operatorname{sat}\!\Big(\frac{\bm e_s}{\delta_s}\Big)
- e_f\, k_f\, \operatorname{sat}\!\Big(\frac{e_f}{\delta_f}\Big).
\end{aligned}
\end{equation}

\noindent The first two terms of \eqref{eq:Vdot_with_R} are always negative semi-definite and can be used to dominate the cross-coupling terms. This is due to the following property:

\begin{align}
    e_f\Theta(e_f) &\ge \mathcal{C}_1^{\mathcal{Q}\mathcal{J}} |e_f|^{\mathcal{M}+1} + \mathcal{C}_2^{\mathcal{Q}\mathcal{J}}|e_f|^{\mathcal{N}+1} \nonumber \\
    &\quad + |e_f|^{\mathcal{J}+1} \ge \alpha_f|e_f|^{p_f}, 
    \label{eq:theta_lb2}
\end{align}
with
\begin{equation}
\begin{aligned}
    p_f &= \min\{\mathcal{M}+1,\; \mathcal{J}+1\} > 1, \\
    \alpha_f &= \min\{\mathcal{C}_1^{\mathcal{Q}\mathcal{J}},\; \mathcal{C}_2^{\mathcal{Q}\mathcal{J}},\; 1 \},
\end{aligned}
\end{equation}

\noindent representing the least dominant term coefficients. Using a similar approach for $\bm e_s$, there exist positive constants $\alpha_s,\alpha_f$ and exponents $p_s,p_f>1$ such that
\begin{equation}
\label{eq:theta_lb}
\bm e_s^\top\Theta(\bm e_s)\ge \alpha_s\|\bm e_s\|^{p_s},\qquad
e_f\Theta(e_f)\ge \alpha_f|e_f|^{p_f}.
\end{equation}

\noindent Let us define $
A\triangleq \hat{k}_e^{-1}\bm L\bm\lambda_1,\quad
B\triangleq \hat{k}_e\,\bm Q\bm L^\dagger,
$
with norms $\|A\|,\|B\|$, and $T_1=\lambda_s \bm e_s^\top \hat{k}_e^{-1} \bm L \bm\lambda_1 \,\Theta(e_f)$, $T_2=e_f\, \hat{k}_e\,\bm Q \bm L^\dagger \,\Theta(\bm e_s)$, therefore:
\[
|T_1|\le \lambda_s\|A\|\,\|\bm e_s\|\,|\Theta(e_f)|,\quad
|T_2|\le \|B\|\,|e_f|\,\|\Theta(\bm e_s)\|.
\]
Since $\|\Theta(\bm e_s)\|\!\le\! C_{\Theta s}\|\bm e_s\|^{q_s}$, $|\Theta(e_f)|\!\le\! C_{\Theta f}|e_f|^{q_f}$, Young’s inequality with $\eta_{1,2}\in(0,1)$ yields
\begin{equation}
\label{eq:T1T2_bound}
|T_1|+|T_2| \le \tfrac14\lambda_s\alpha_s\|\bm e_s\|^{p_s} + \tfrac14\alpha_f|e_f|^{p_f}
+ c_1\|\bm e_s\|^{\beta_s} + c_2|e_f|^{\beta_f},
\end{equation}
with $\beta_s,\beta_f\!\ge\!1$ and $c_{1,2}\!\ge\!0$ depending on $\|A\|,\|B\|,C_{\Theta s},C_{\Theta f},\eta_{1,2},\alpha_{s,f}$. Choosing exponents ensures $\beta_s<p_s,\;\beta_f<p_f$ or makes $c_{1,2}$ arbitrarily small. For each component $e_{s,i}$ of $\bm e_s$, define the standard saturation function as
\begin{equation}
\operatorname{sat}\!\left(\frac{e_{s,i}}{\delta_s}\right)
=
\begin{cases}
\operatorname{sign}(e_{s,i}), & |e_{s,i}| \ge \delta_s,\\[2mm]
\dfrac{e_{s,i}}{\delta_s}, & |e_{s,i}| < \delta_s.
\end{cases}
\label{eq:sat_s}
\end{equation}
Likewise,
\begin{equation}
\operatorname{sat}\!\left(\frac{e_f}{\delta_f}\right)
=
\begin{cases}
\operatorname{sign}(e_f), & |e_f| \ge \delta_f,\\[2mm]
\dfrac{e_f}{\delta_f}, & |e_f| < \delta_f.
\end{cases}
\label{eq:sat_f}
\end{equation}

\noindent Let\[
\mathcal R
=
\lambda_s k_s \bm e_s^\top
\operatorname{sat}\!\left(\frac{\bm e_s}{\delta_s}\right)
+
k_f e_f
\operatorname{sat}\!\left(\frac{e_f}{\delta_f}\right).
\]
Therefore,
\begin{equation}
\label{eq:R_lower}
\mathcal R
\ge
\lambda_s k_s
\min\!\left\{
2N_l \delta_s,\;
\frac{\|\bm e_s\|^2}{\delta_s}
\right\}
+
k_f
\min\!\left\{
\delta_f,\;
\frac{e_f^2}{\delta_f}
\right\}.
\end{equation}

\noindent The remainder contributions satisfy 
$
\lambda_s|\bm e_s^\top\bm\varepsilon_s| \le \lambda_s\|\bm e_s\|\,\bar{\varepsilon}_s,\qquad
|e_f\varepsilon_f| \le |e_f|\,\bar{\varepsilon}_f.
$
We now use the robust term $\mathcal R$ to dominate these linear-in-error remainders. If $|e_{s,i}|<\delta_s$ for all $i$ and $|e_f|<\delta_f$, then \eqref{eq:R_lower} implies the robust term provides quadratic terms
\[
\mathcal R \ge \lambda_s k_s \frac{\|\bm e_s\|^2}{\delta_s} + k_f \frac{e_f^2}{\delta_f}.
\]
The Young's inequality is used to show these quadratic terms dominate the linear-in-error remainder bounds $\lambda_s\|\bm e_s\|\,\bar{\varepsilon}_s$ and $|e_f|\,\bar{\varepsilon}_f$:
\vspace{-0.2cm}
\[
\lambda_s\|\bm e_s\|\,\bar{\varepsilon}_s
\le \tfrac12 \lambda_s k_s \frac{\|\bm e_s\|^2}{\delta_s} + \frac{\lambda_s\bar{\varepsilon}_s^2\delta_s}{2\lambda_s k_s},
\]
and similarly for the force channel. It is feasible to select $k_s,k_f,\delta_s,\delta_f$ such that quadratic robust expressions dominate the linear remainders with a margin. {The higher the ratio $k_{s,f}/\delta_{s,f}$, the shorter the  convergence time but the larger the steady-state bound}. The bounds \eqref{eq:T1T2_bound}--\eqref{eq:R_lower} {and the remainder cancellations can be substituted into} \eqref{eq:Vdot_with_R}. By the {above construction} there exist choices of $k_s,k_f,\delta_s,\delta_f$ and the small Young parameters $\eta_{1,2}$ such that the sum of the cross-coupling terms and the remainders is fully compensated by the robust term, i.e., there exist positive constants $\bar\kappa_s,\bar\kappa_f>0$ and exponents $\mu_s,\mu_f\ge 1$ with
\[
\dot V_t \le -\bar\kappa_s \|\bm e_s\|^{\mu_s} - \bar\kappa_f |e_f|^{\mu_f},\qquad
\forall (\bm e_s,e_f)\neq(\bm 0,0).
\]

\noindent Since $\|\bm e_s\| \le \sqrt{2V_t}$ and $|e_f| \le \sqrt{2V_t}$, we have:
\[
\dot V_t 
\le
-\bar\kappa_s (\sqrt{2}V_t)^{\mu_s/2}
-\bar\kappa_f (\sqrt{2}V_t)^{\mu_f/2}.
\]
Defining $\mathcal{B}_1=\bar\kappa_s 2^{-\mu_s/2}$ and $\mathcal{B}_2=\bar\kappa_f 2^{-\mu_f/2}$, one obtains
\[
\dot V_t \le - (\mathcal{B}_1 V_t^{\mu_s/2} + \mathcal{B}_2 V_t^{\mu_f/2}).
\]
For $0<\mu_s<2$ and $\mu_f>2$, the closed-loop system is fixed-time stable according to Lemma~\ref{lemma:fixed-time-stability}.

\end{proof}


\vspace{-1cm}
\subsection{Projection Compensation Employing Virtual Camera}
To address the underactuation of multirotor UAVs, it is not feasible to directly control the roll and pitch joint variables. To overcome this limitation in visual servoing, a method similar to \cite{Kim2016, Xu2023,Xiu2025} is adopted. This method constructs visual features within a virtual camera frame whose origin coincides with that of the actual camera frame, while its roll and pitch angles are constrained to zero and its yaw angle is identical to that of the aerial platform. As a result, the controllable joint variables associated with the roll and pitch of the aerial platform are effectively eliminated.

\section{Simulations and Experiments}\label{s4}

This section presents several simulation and experimental studies conducted to verify and analyze the performance of the proposed controller.

The TD-ACM parameters used in the simulations are established as follows: the backbone density is $6450~[\mathrm{kg/m^3}]$, with a Young's modulus of $50 \times 10^9~[\mathrm{Pa}]$ and a material damping coefficient of $10^9~[\mathrm{Pa\cdot s}]$. Poisson's ratio is set to $0.33$. The UAV mass is $2.3562~[\mathrm{kg}]$. Geometrically, the arm is defined by a backbone radius of $1~[\mathrm{mm}]$ and a tendon-to-centerline distance of $2~[\mathrm{cm}]$. 

Fig.~\ref{fig:exp_setup} illustrates the annotated indoor experimental setup. The hardware configuration includes a Nano17 six-axis F/T sensor (ATI Industrial Automation, NC, USA) for measuring interaction forces and a Logitech C615 HD camera (Logitech, CA, USA) with a set resolution of $640 \times480\ [\mathrm{pixel}]$ and $30 \ [\mathrm{fps}]$, both mounted at the tip via a custom-fabricated PLA holder. A Vicon motion capture system (Vicon Motion Systems Ltd., UK) is employed to estimate the tip pose in the reference frame. The CR is composed of several elements including backbone, spacer disks, tendons, actuation units, and an aluminum base. The backbone is made of Nitinol with a density of $7800\ \mathrm{[kg/m^3]}$,
Young’s modulus $207\ \mathrm{[GPa]}$, length $0.4\ \mathrm{[m]}$, and radius $0.9\ \mathrm{[mm]}$. Four braided fishing lines serve as tendons routed along a circle with a radius of $18.55\ \mathrm{[mm]}$, running parallel to the backbone.

\begin{figure}
\vspace{0.4cm}
    \centering
    \includegraphics[scale = 0.4]{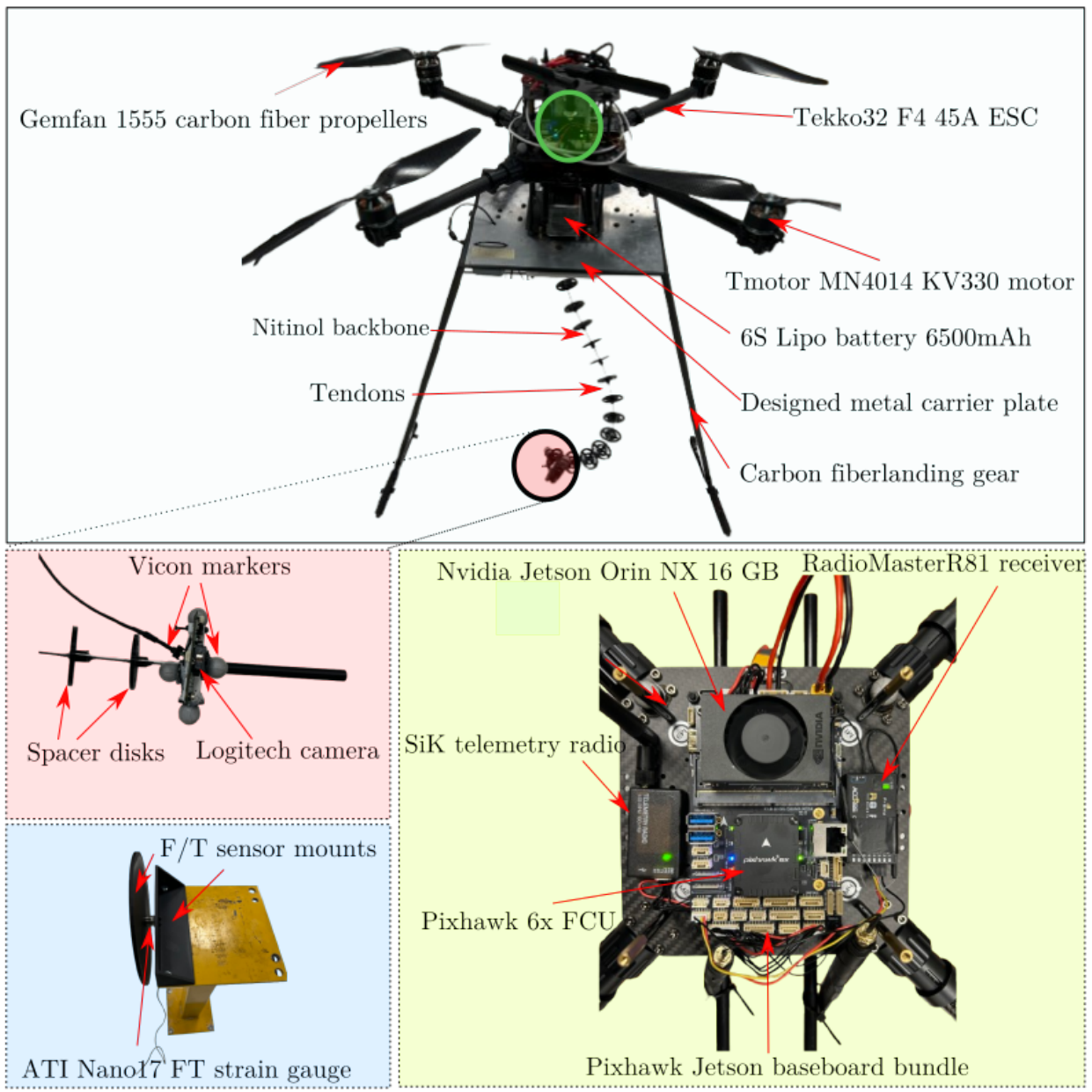}
    \caption{The components of the experimental setup.}
    \label{fig:exp_setup}
\end{figure}


\subsection{Simulation Studies}

The simulation studies comprise three parts: quasi-static tracking and robustness to initial conditions; time-varying vision and force signal tracking demonstrating error convergence; and a comparative controller performance analysis.

In the first test, a square trajectory is adopted for the desired image profile $
\bm{s}_d = \begin{bmatrix} \bm{s}_{d_{\theta}}^\top & \bm{s}_{d_{\rho}}^\top \end{bmatrix}^\top$, with \linebreak $\bm{s}_{d_{\theta}}= \begin{bmatrix} -0.8571 & 3.1416 & -2.2845 
\end{bmatrix}\ \mathrm{[rad]}$ and $\bm{s}_{d_{\rho}}$ is a piecewise constant function. The desired quasi-static normal interaction force profile is 
$f_{dn} = 2.5~\mathrm{[N]}$ for $0 < t \leq 17 ~\mathrm{[s]}$, 
and $0.5~\mathrm{[N]}$ otherwise. Fig.~\ref{fig:TestB}(a) to Fig.~\ref{fig:TestB}(d) illustrate the simulation results for two distinct initial conditions. The total elapsed time of this test is $25\ \mathrm{[s]}$ with a step-size of $1\ \mathrm{[ms]}$. The estimated stiffness of the environment is considered as $\hat{k}_e=10^4\ \mathrm{[N/m]}$. The target is an equilateral triangle with a side of $0.2\ \mathrm{[m]}$ lying in the XZ plane of the inertial frame. The approximate plane parameters are $a_p = 0$, $b_p =0$, $c_p = 3$, and $d_p = -1$. The line features including $\theta_i$ and $\rho_i$ for $i \in \{1,2,3\}$ are depicted in Fig.~\ref{fig:TestB}(a) and \ref{fig:TestB}(b), for both of the initial conditions. The fluctuations between $\pm\pi\ \mathrm{[rad]}$ stem from the wrapped difference operator on the angles. The normal forces for both initial conditions are shown for the entire simulation time in Fig.~\ref{fig:TestB}(c). The free-flight mode lasts less than $0.05\ \mathrm{[s]}$, and the system exhibits transient oscillations following a peak contact force of approximately $12\ \mathrm{[N]}$. During the first second of the run, the interaction force approaches the first desired set-point of $2.5\ \mathrm{[N]}$, followed by regulation of the force to $0.5\ \mathrm{[N]}$ after $t=17\ \mathrm{[s]}$. Fig.~\ref{fig:TestB}(d) exhibits the projected image of the target for the two different initial poses (dashed lines for the first case and solid lines for the second), where the target image (magenta triangle) and vertex traces are superimposed. Lastly, the 3D visualization of the TD-ACM, together with the tip trace, workpiece, and features, is illustrated in Fig.~\ref{fig:TestBDemo}.

\begin{figure}[t] 
\centering
\begin{subfigure}[b]{0.48\linewidth}
    \centering
    \includegraphics[width=\linewidth]{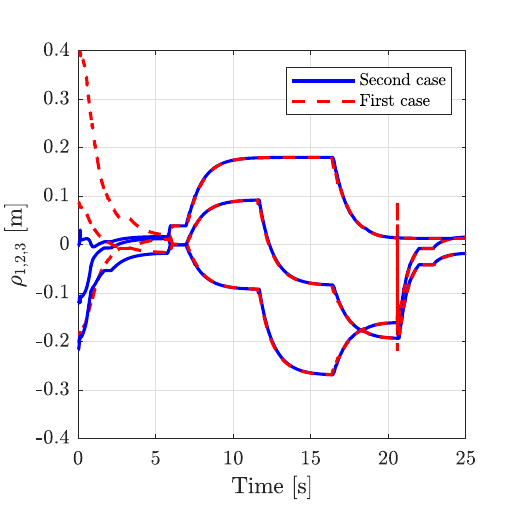}
    \caption{}
    \label{fig:square_6a}
\end{subfigure}
\hfill
\begin{subfigure}[b]{0.48\linewidth}
    \centering
    \includegraphics[width=\linewidth]{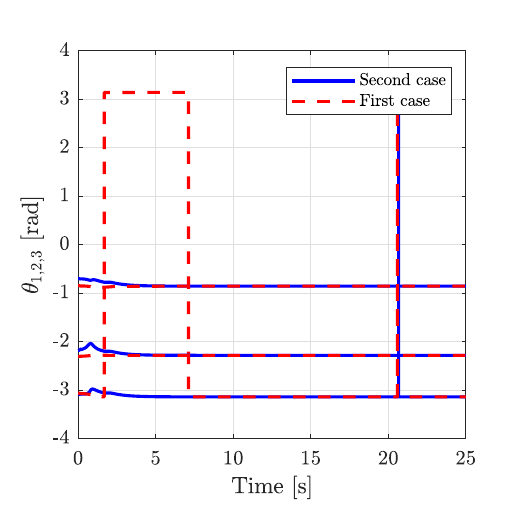}
    \caption{}
    \label{fig:square_6b}
\end{subfigure}

\vspace{2mm} 

\begin{subfigure}[b]{0.48\linewidth}
    \centering
    \includegraphics[width=\linewidth]{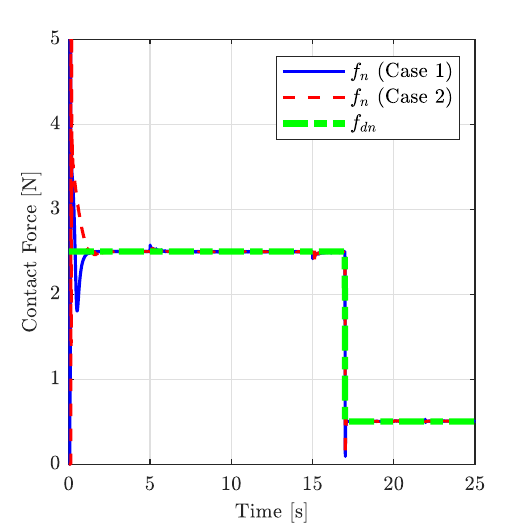}
    \caption{}
    \label{fig:square_6c}
\end{subfigure}
\hfill
\begin{subfigure}[b]{0.48\linewidth}
    \centering
    \includegraphics[width=\linewidth]{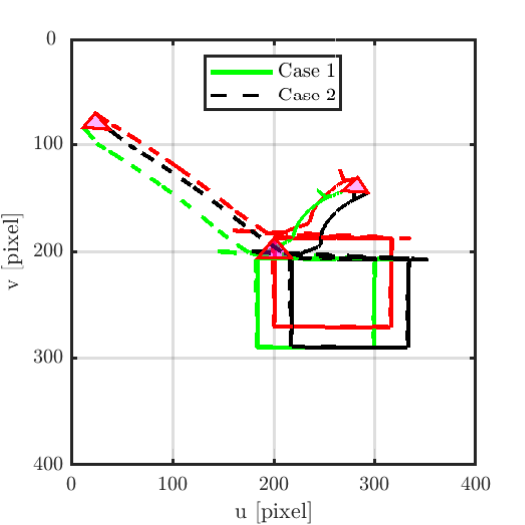} 
    \caption{}
    \label{fig:square_6d}
\end{subfigure}

\caption{Simulation results: (a) and (b) line feature evolution; (c) force tracking comparison; and (d) initial/final images and vertex trace.}
\label{fig:TestB}
\end{figure}

\begin{figure}
\centering
\includegraphics[scale = 0.18]{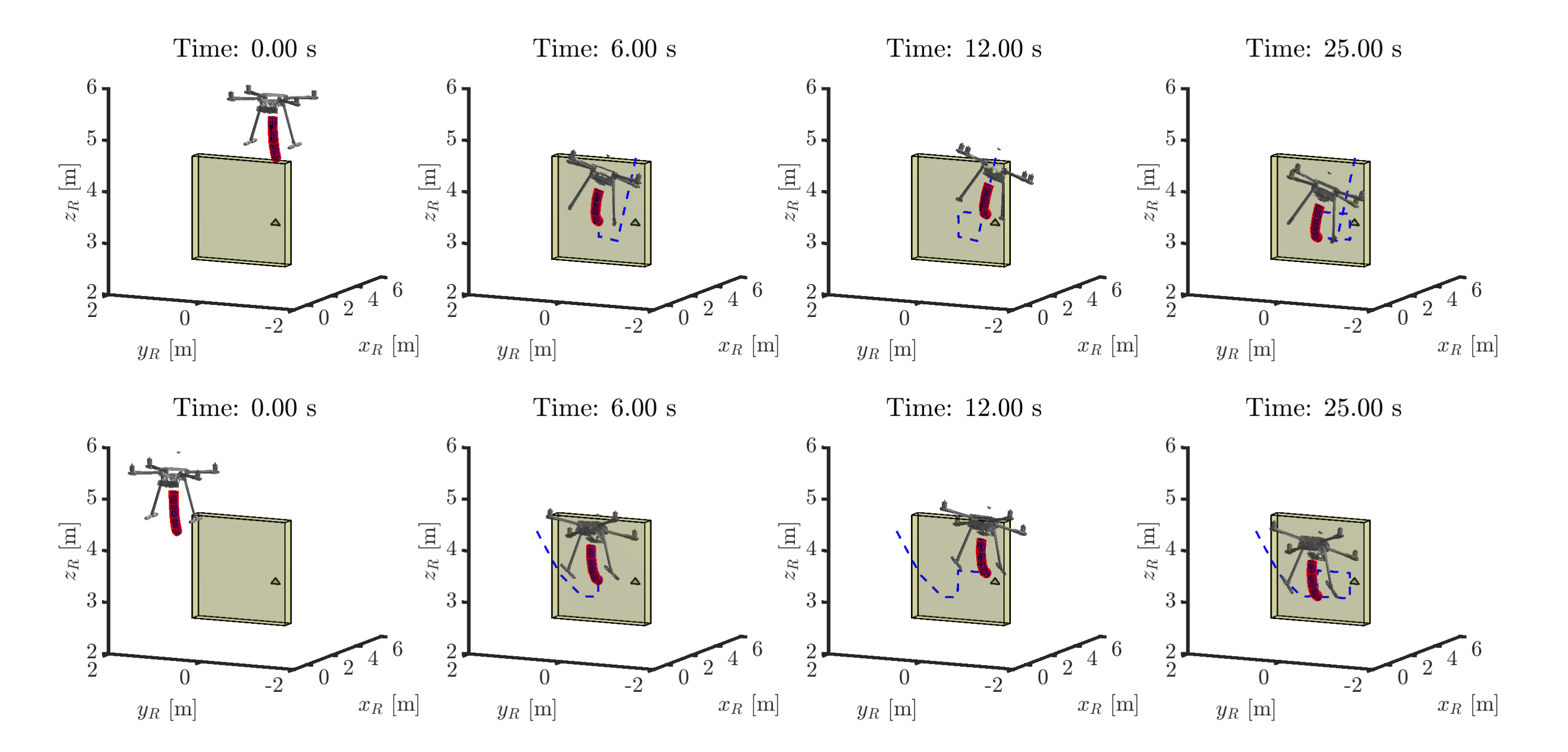}
\caption{Sequential snapshots at various time frames for Case 1 (top) and Case 2 (bottom). The dashed blue line indicates the CR tip trace.}
\label{fig:TestBDemo}
\end{figure}

In the second test, the desired $\bm{s}_d$ is captured when the camera traverses an infinity-shaped trajectory, while the desired force profile is $ f_{dn}=2.5(1+\sin{\frac{12.5t}{2\pi}})\ [\mathrm{N}]$. The results of this test are depicted in Fig.~\ref{fig:TestC}(a) to Fig.~\ref{fig:TestC}(d), and the graphical results, as shown in Fig.~\ref{fig:TestCDemo}, demonstrate the corresponding 3D visualization at various time instances. Fig.~\ref{fig:TestC}(a) and Fig.~\ref{fig:TestC}(b) illustrate the image features for both linear and angular components, respectively. The image feature error vector stabilizes, as shown in Fig.~\ref{fig:TestC}(c). The initial and the final images of the target are also indicated in Fig.~\ref{fig:TestC}(d). Fig.~\ref{fig:TestC}(e) shows the force setpoint, measured force, and normal tip displacement $p_t$. Stable contact is achieved within milliseconds and maintained for the duration of the process. Lastly, Fig.~\ref{fig:TestCDemo} illustrates the infinity-shaped tracking while maintaining the contact with the surface during the maneuver for several time instances over an extended simulation time of $25\ \mathrm{[s]}$. 
\begin{figure}[t] 
    \centering
    \setlength{\tabcolsep}{1pt} 

    \begin{subfigure}[b]{0.48\linewidth}
        \centering
        \includegraphics[width=\linewidth]{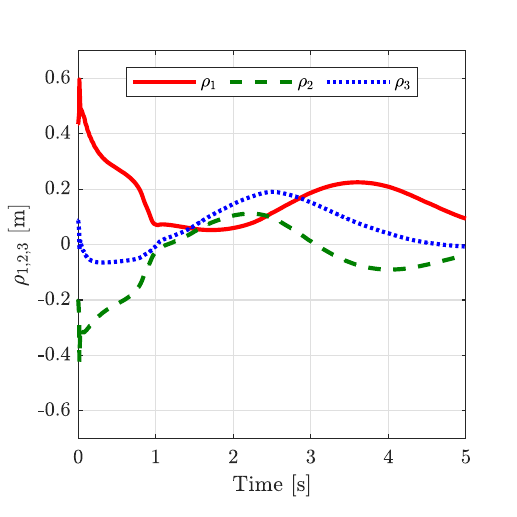}
        \caption{}
        \label{fig:inf_9a}
    \end{subfigure}
    \hfill
    \begin{subfigure}[b]{0.48\linewidth}
        \centering
        \includegraphics[width=\linewidth]{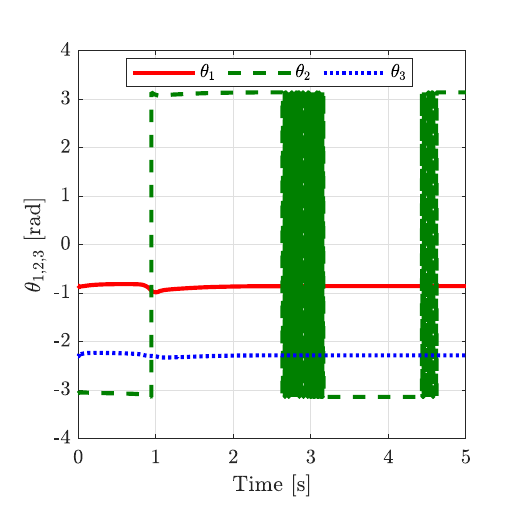}
        \caption{}
        \label{fig:inf_9b}
    \end{subfigure}

    \vspace{2mm} 

    \begin{subfigure}[b]{0.48\linewidth}
        \centering
        \includegraphics[width=\linewidth]{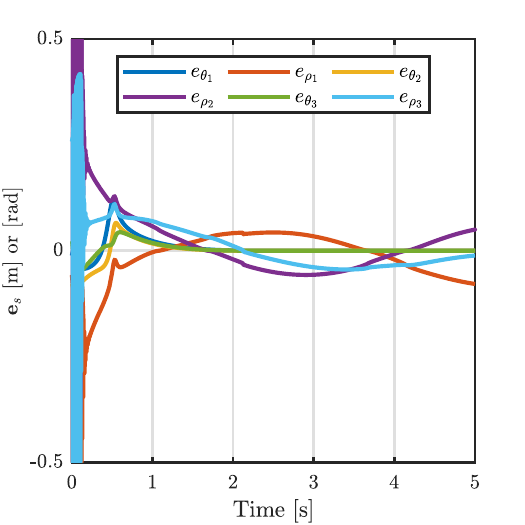}
        \caption{}
        \label{fig:inf_8d}
    \end{subfigure}
    \hfill
    \begin{subfigure}[b]{0.48\linewidth}
        \centering
        \includegraphics[width=\linewidth]{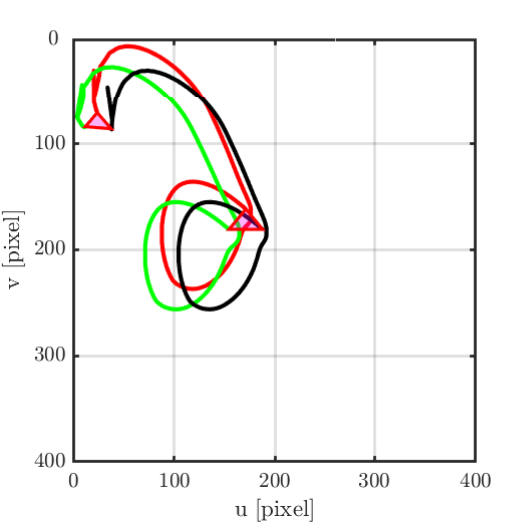}
        \caption{}
    \end{subfigure}

    \begin{subfigure}[b]{\linewidth}
        \centering
        \includegraphics[width=0.48\linewidth]{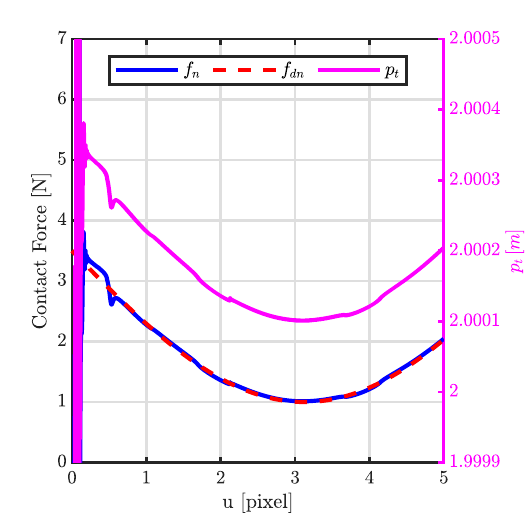}
        \caption{}
    \end{subfigure}

    \caption{Simulated responses: (a) and (b) line features; (c) image feature error; (d) image space; (e) interaction force tracking and displacement.}
    \label{fig:TestC}
\end{figure}

In the third test, we investigate the performance of the proposed scheme as opposed to two other methods traditionally applied in rigid robotics, the PI/PD and CISMC. To the best of our knowledge, such comparative studies have not been previously conducted for the TD-ACM. The reference trajectories introduced in the second test are selected in a timeframe of $5\ \mathrm{[s]}$. Table~\ref{tab:comparison} lists the performance of the proposed method in terms of RMSE, STD of error, integral absolute error (IAE), and integral time absolute error (ITAE). 

The results show improved performance in both visual and force tracking compared to other methods. Regarding vision error, the proposed method achieves an RMSE of $0.06$, almost half that of the PD controller ($0.12$), and improves the STD by $28\%$. However, an increase is visible in the IAE and ITAE metrics, indicating more fluctuations in the transient response. The proposed method outperforms the CISMC by more than $50\%$ in all criteria. Likewise, force tracking is improved by $40\%$ in terms of STD and RMSE compared to the CISMC, while the enhancement in ITAE and IAE is $9\%$ and $43\%$, respectively.

\begin{figure}[b]
    \centering
    \includegraphics[scale = 0.18]{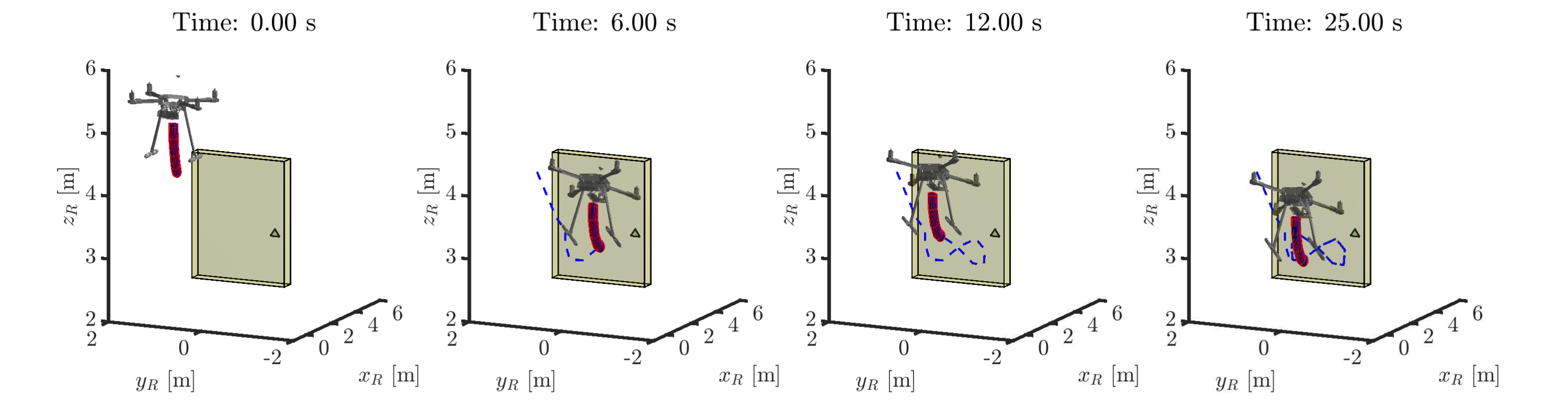}
\caption{3D visualization of the TD-ACM. The dashed blue line indicates the CR tip trace.}
    \label{fig:TestCDemo}
\end{figure}

\begin{figure}[t!]
    \centering

 \centering
    \begin{subfigure}{0.48\linewidth} 
        \centering
        \includegraphics[width=\linewidth]{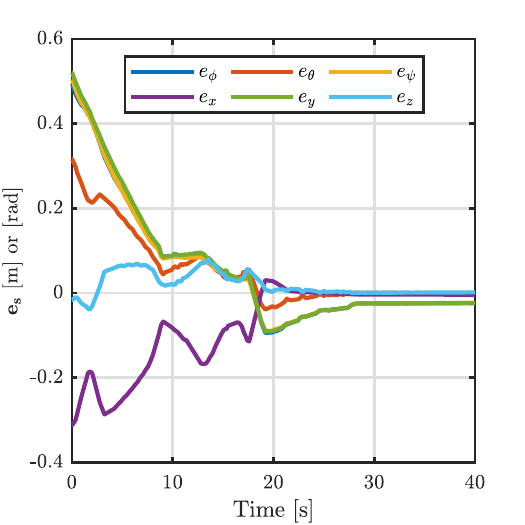}
        \caption{}
    \end{subfigure}\hfill
    \begin{subfigure}{0.48\linewidth}
        \centering
        \includegraphics[width=\linewidth]{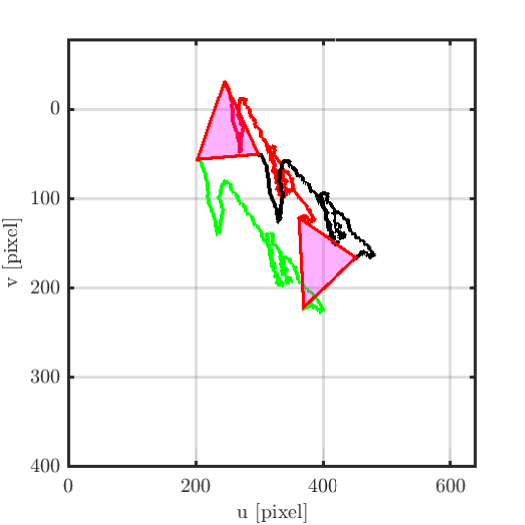}
        \caption{}
    \end{subfigure}\hfill
    
    \begin{subfigure}{0.48\linewidth}
        \centering
        \includegraphics[width=\linewidth]{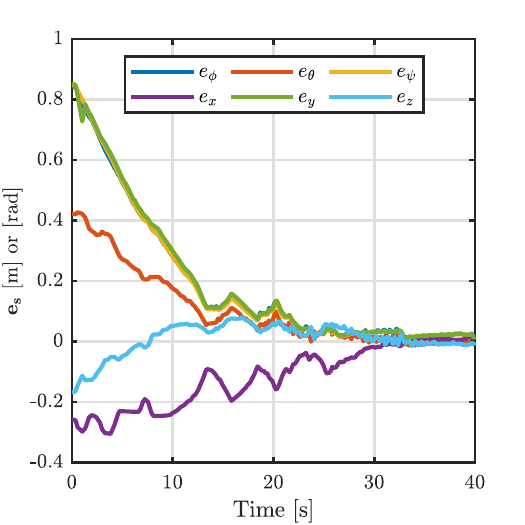}
        \caption{}
    \end{subfigure}\hfill
    \begin{subfigure}{0.48\linewidth}
        \centering
        \includegraphics[width=\linewidth]{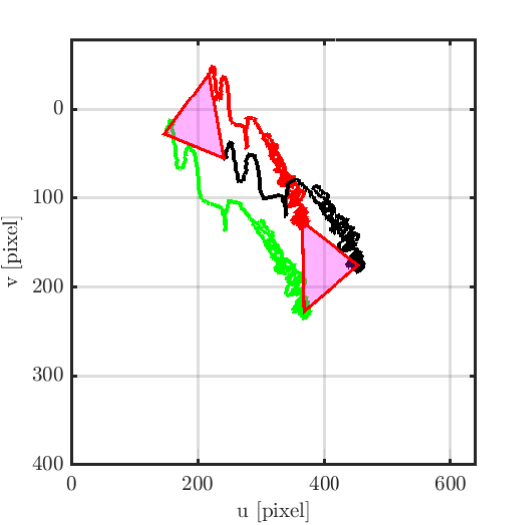}
        \caption{}
    \end{subfigure}\hfill

\caption{Contactless experimental tests: (a, b) the first initial condition; (c, d) the second initial condition.}
\label{fig:ContactLess_Exp_First}
\end{figure}

\begin{table}[ht]
\centering
\caption{The computed indices for vision/force control in Test 2.}
\footnotesize 
\setlength{\tabcolsep}{4pt} 
\begin{tabular}{llrrrr}
\toprule
\textbf{Approach} & & \textbf{RMSE} & \textbf{STD} & \textbf{ITAE} & \textbf{IAE} \\
\midrule
\multirow{3}{*}{Vision} 
& PD & 0.12 & 0.07 & 3.43 & 1.67 \\
& ISMC & 0.67 & 0.66 & 35.32 & 68.26 \\
& Proposed & 0.06 & 0.05 & 16.48 & 11.60 \\
\midrule
\multirow{3}{*}{Force} 
& PI & 4.49 & 4.50 & 10.87 & 16.09 \\
& ISMC & 2.34 & 2.33 & 27.35 & 131.6 \\
& Proposed & 1.39 & 1.39 & 24.77 & 74.82 \\
\bottomrule
\end{tabular}
\label{tab:comparison}
\end{table}
\subsection{Experiment - Contactless Ground Tests}

This experiment consists of two parts, for the sake of comparison. The first experiment evaluates controller performance using AprilTag markers, while the second utilizes deep feature extraction. The image-space graphs for two different initial conditions under the same control parameters are exhibited in Fig.~\ref{fig:ContactLess_Exp_First}. The line features coincide with the edges of an equilateral triangle defined by the marker centers. As shown in Fig.~\ref{fig:ContactLess_Exp_First}(a) and Fig.~\ref{fig:ContactLess_Exp_First}(c), the controller's stability is demonstrated by the convergence of the error $\bm{e}_s$ to a negligible value within approximately $25\ \mathrm{[s]}$ in both cases. The image-space evolution and the resulting feature trajectories are depicted in Fig.~\ref{fig:ContactLess_Exp_First}(b) and Fig.~\ref{fig:ContactLess_Exp_First}(d). It can be observed that despite differences in the transient trajectories, the final projected features reach nearly identical steady-state configurations.


The second part of this experiment evaluates the performance of the proposed control law when integrated with the deep feature extractor. Unlike in conventional methods, the number of detected line features varies dynamically across frames. As illustrated in Fig.~\ref{fig:contactfree_exp_part1}(a), the Euclidean norm of the image feature error vector converges to less than $0.01$ within $20\ [s]$. Furthermore, the commanded twist vector shown in Fig.~\ref{fig:contactfree_exp_part1}(b) remains consistently below $1\ \mathrm{[m/s]}$ and approaches zero after $20\ \mathrm{[s]}$, demonstrating the stability of the vision-based control system. The global tip pose, including camera position and Euler angles, was recorded using a Vicon motion capture system and is presented in Fig.~\ref{fig:contactfree_exp_part1}(c) and Fig.~\ref{fig:contactfree_exp_part1}(d), respectively. Observed variations in the tip position remain within the range of $2$--$2.5\ \mathrm{[cm]}$. 

\begin{figure}[t]
    \centering
    \setlength{\tabcolsep}{1pt}
    \begin{subfigure}{0.48\linewidth} 
        \centering
        \includegraphics[width=\linewidth]{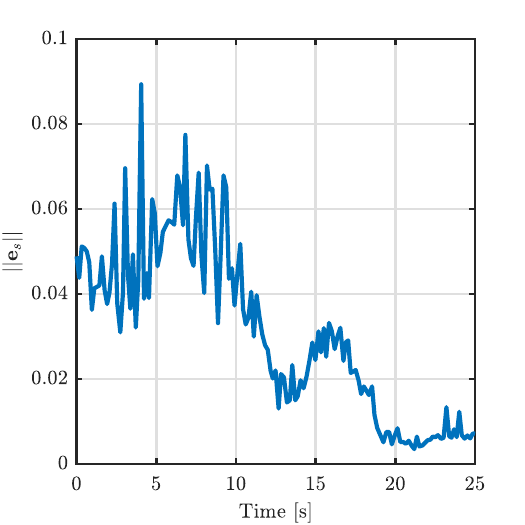}
        \caption{}
    \end{subfigure}
    \hfill
    \begin{subfigure}{0.48\linewidth}
        \centering
        \includegraphics[width=\linewidth]{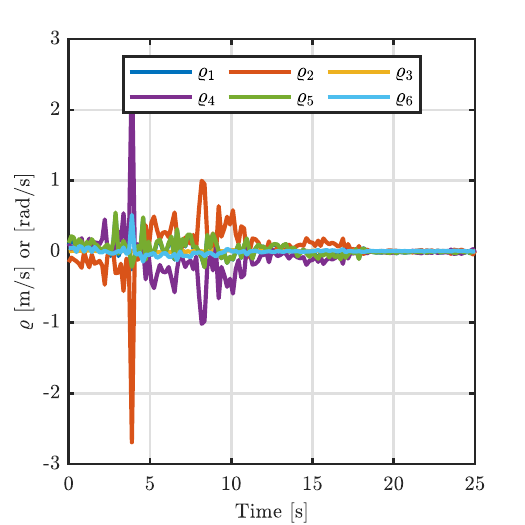}
        \caption{}
    \end{subfigure}
    
    \begin{subfigure}{0.48\linewidth}
        \centering
        \includegraphics[width=\linewidth]{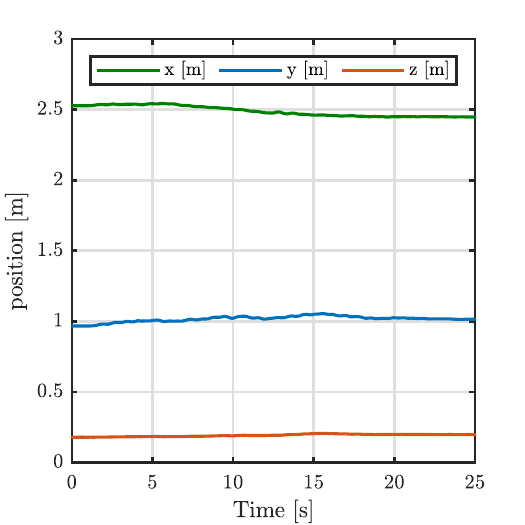}
        \caption{}
    \end{subfigure}
    \hfill
    \begin{subfigure}{0.48\linewidth}
        \centering
        \includegraphics[width=\linewidth]{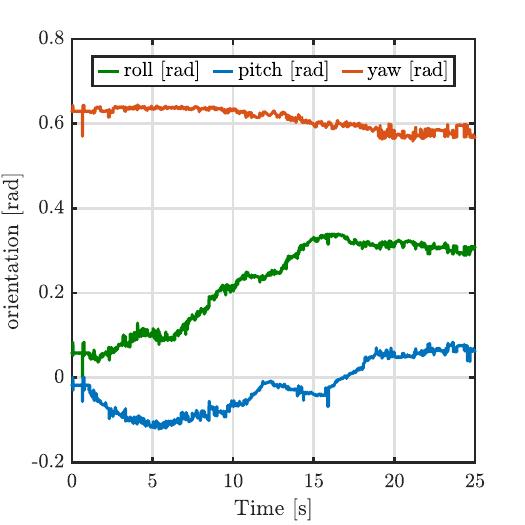}
        \caption{}
    \end{subfigure}

\caption{Contactless experiment results for the second scenario: (a) image error norm; (b) camera twist; (c) camera translation; and (d) camera orientation.}
    \label{fig:contactfree_exp_part1}
\end{figure}

\vspace{-0.3cm}
\subsection{Experiment - Push Test With Various Scenes}

This experiment characterizes the performance of the hybrid vision–force controller based on the proposed control law. A secondary objective is to assess the impact of environmental variations by modifying the target scene across two experimental cases. 

Fig.~\ref{fig:Contact_Exp}(a) shows the Euclidean norm of the image feature error alongside the normal force error over a duration of $120\ \mathrm{[s]}$, with a reference force set to $f_{dn} = 0.1\ \mathrm{[N]}$. The CR exploits the inherent compliance of the Nitinol backbone to safely regulate small forces, enabling delicate interactions such as those with deformable objects. While contact is established within the first $10\ \mathrm{[s]}$, a transient overshoot in the vision loop causes a momentary loss of contact before it is successfully re-established. Both vision and force errors converge to a steady state after approximately $50\ \mathrm{[s]}$. The corresponding camera twist is illustrated in Fig.~\ref{fig:Contact_Exp}(b) for the first scene. The interaction wrench exerted by the workpiece on the F/T sensor is plotted in Fig.~\ref{fig:Wrench_EXP}. During contact, the tangential force component remains below $0.02\ \mathrm{[N]}$, while the first component of torque reaches just above $2\ \mathrm{[Nmm]}$.

The combined image feature norm and normal force errors for the second scene are illustrated in Fig.~\ref{fig:Contact_Exp}(c). The second case contains a higher density of linear features; consequently, despite a larger initial feature error, this configuration achieves faster convergence (under $10~\mathrm{[s]}$) and requires smaller commanded twist values compared to the first case, as shown in Fig.~\ref{fig:Contact_Exp}(d). The resulting camera views for both scenarios are provided in Fig.~\ref{fig:Contact_Exp_camview}. The second scenario exhibits faster error stabilization because the increased number of features leads to enhanced conditioning of the interaction matrix.

\begin{figure}[t]
    \centering

 \centering
    \begin{subfigure}{0.48\linewidth} 
        \centering
        \includegraphics[width=\linewidth]{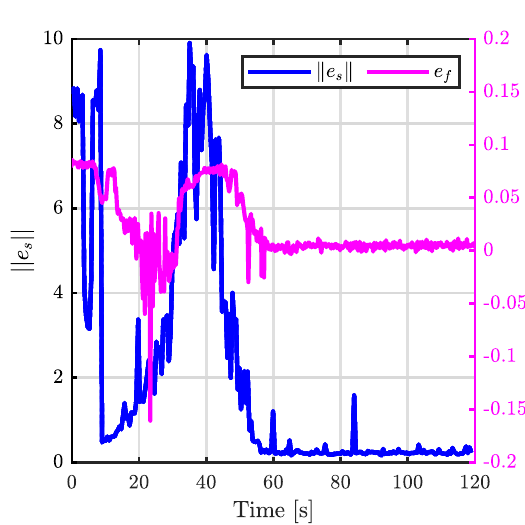}
        \caption{}
    \end{subfigure}\hfill
    \begin{subfigure}{0.48\linewidth}
        \centering
        \includegraphics[width=\linewidth]{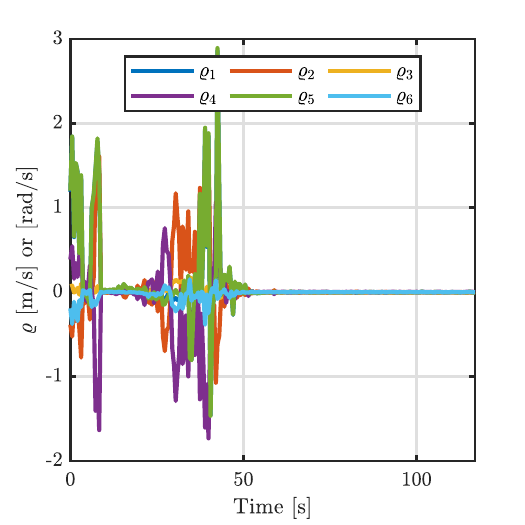}
        \caption{}
    \end{subfigure}\hfill
    
    \begin{subfigure}{0.48\linewidth}
        \centering
        \includegraphics[width=\linewidth]{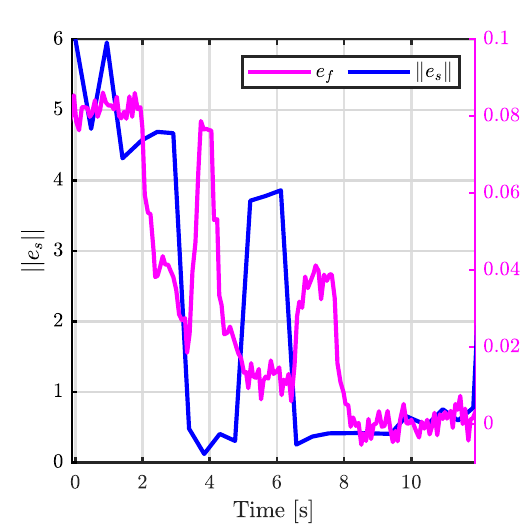}
        \caption{}
    \end{subfigure}\hfill
    \begin{subfigure}{0.48\linewidth}
        \centering
        \includegraphics[width=\linewidth]{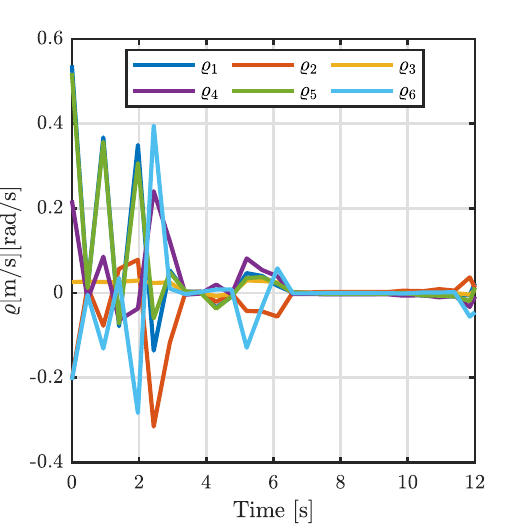}
        \caption{}
    \end{subfigure}\hfill

    \begin{subfigure}{0.48\linewidth}
        \centering
        \includegraphics[width=\linewidth]{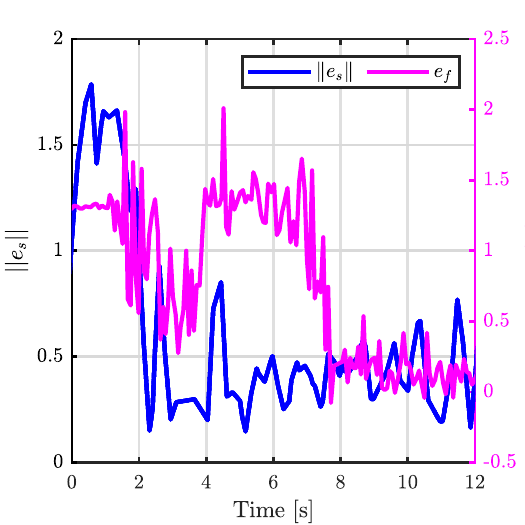}
        \caption{}
    \end{subfigure}\hfill
    \begin{subfigure}{0.48\linewidth}
        \centering
        \includegraphics[width=\linewidth]{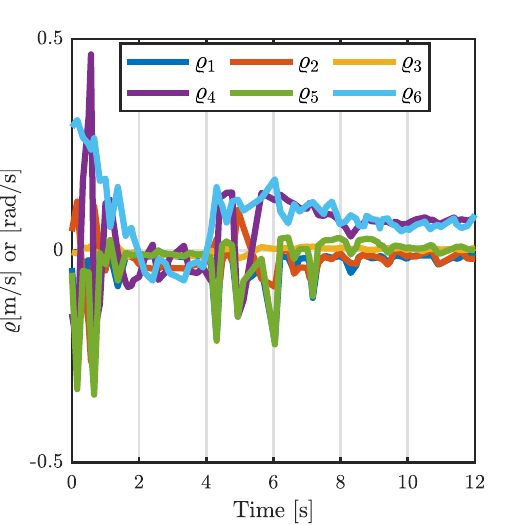}
        \caption{}
    \end{subfigure}\hfill
    
\caption{Contact experiment results: (a, b) first ground test scenario; (c, d) second ground test scenario; and (e, f) aerial test.}

\label{fig:Contact_Exp}
\end{figure}

\begin{figure}[t]
\centering
    \includegraphics[width=0.48\linewidth]{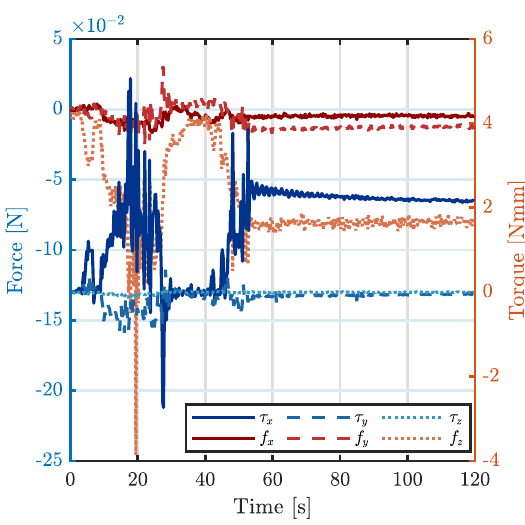}
    \caption{The interaction wrench for the first contact experimental test.}
\label{fig:Wrench_EXP}
\end{figure}

\begin{figure}[ht]
    \centering

 \centering
    \begin{subfigure}{\linewidth} 
        \centering
        \includegraphics[width=\linewidth]{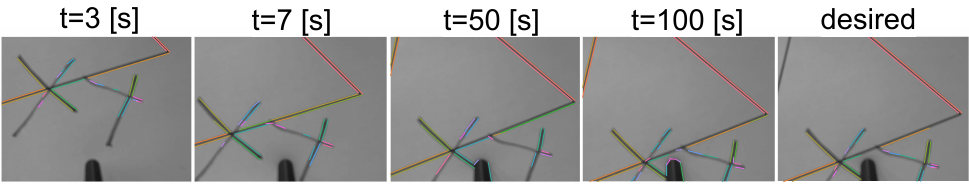}
        \caption{}
    \end{subfigure}\hfill
    
    \begin{subfigure}{\linewidth}
        \centering
        \includegraphics[width=\linewidth]{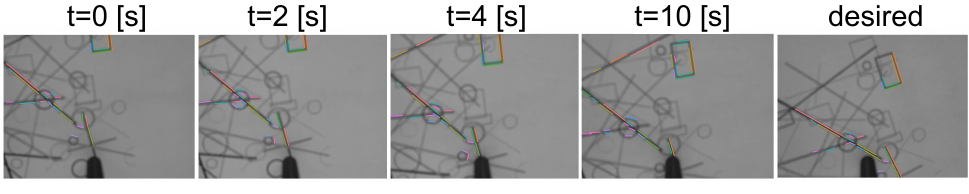}
        \caption{}
    \end{subfigure}\hfill

\caption{Camera views from the ground contact tests: (a) first scenario; (b) second scenario.}

    \label{fig:Contact_Exp_camview}
\end{figure}

\subsection{Experiment - Push Test in Aerial Scenario}

This study validates the efficacy of the proposed hybrid vision/force controller for aerial applications, with results illustrated in Fig.~\ref{fig:Contact_Exp}(e) and Fig.~\ref{fig:Contact_Exp}(f). Fig.~\ref{fig:Contact_Exp}(e) depicts the evolution of the image feature error norm $\|\bm{e}_s\|$ and the normal force error $e_f$ over a $12\ \mathrm{[s]}$ duration, using a reference force of $f_{dn} = 1.3\ \mathrm{[N]}$. The visual error converges in less than $5\ \mathrm{[s]}$, while the force signal reaches a steady state within $7\ \mathrm{[s]}$. The commanded local camera twist components are shown in Fig.~\ref{fig:Contact_Exp}(f). In contrast to the tests conducted on the ground, this test exhibits increased oscillations in the error signals due to aerodynamic disturbances, specifically the UAV downwash effect on the CR.


\section{Conclusions}\label{s5}
This work reports the design of a hybrid motion/force controller for a coupled TD-ACM based on constant curvature modeling. The novel controller combines cascaded fixed-time vision/force loops using line features and estimates uncertainties using an RBFNN. The line features are extracted using a GNN framework that leverages connectivity information and is shared with the controller to execute line-based IBVS, maintaining simultaneous motion and force tracking on a flat vertical surface. The fixed-time stability proof, underlying assumptions, and definitions of the proposed controller and adaptive gains are presented. Various simulation and experimental studies are conducted to validate the feasibility and performance of the developed method under different initial conditions, quasi-static scenarios, and complex time-varying motion and force profiles. In addition, a comparative study with hybrid control methods applied to rigid manipulators is presented. Compared to the PD controller, our method halves the RMSE ($0.06$ vs. $0.12$) and reduces the STD by $28\%$. Finally, the experimental results demonstrate the controller’s robustness with respect to factors such as variations in the scene, vibrations present in aerial tests, and servoing performance when features are extracted using markers. The controller maintains low interaction forces, $f_{n} < 2\ \mathrm{[N]}$, allowing for safe interaction with environments such as deformable objects. The main challenges in this work were achieving a compact hardware design and balancing the trade-off between dexterity and payload. Future work will extend this framework to multi-arm TD-ACM cooperative manipulation and shape regulation. Moreover, constraint-aware model-free strategies will be examined to further improve robustness.

\bibliography{manuscript}
\bibliographystyle{ieeetr}
\end{document}